\documentclass[letterpaper]{article} 
\usepackage{aaai24}  
\usepackage{times}  
\usepackage{helvet}  
\usepackage{courier}  
\usepackage[hyphens]{url}  
\usepackage{graphicx} 
\urlstyle{rm} 
\usepackage{natbib}  
\usepackage{caption} 
\frenchspacing  
\setlength{\pdfpagewidth}{8.5in}  
\setlength{\pdfpageheight}{11in}  
%
\usepackage{algorithm}
\usepackage{algorithmic}

%
\usepackage{newfloat}
\usepackage{listings}
\DeclareCaptionStyle{ruled}{labelfont=normalfont,labelsep=colon,strut=off} 
\lstset{%
	basicstyle={\footnotesize\ttfamily},
	numbers=left,numberstyle=\footnotesize,xleftmargin=2em,
	aboveskip=0pt,belowskip=0pt,%
	showstringspaces=false,tabsize=2,breaklines=true}
\floatstyle{ruled}
\newfloat{listing}{tb}{lst}{}
\floatname{listing}{Listing}
%
\pdfinfo{
/TemplateVersion (2024.1)
}

\usepackage{lineno}
\usepackage{amsmath}
\usepackage{amsfonts}
\usepackage{algorithm}
\usepackage{algorithmic}
\usepackage{array}
\usepackage{graphicx}
\usepackage{subfigure}
\usepackage{booktabs}
\usepackage{multirow}
\usepackage{subfigure}
\usepackage{caption}
\usepackage{graphicx} 
\usepackage{makecell}
\usepackage{amsmath}
\usepackage{amsthm}
\usepackage{amssymb}

\newcommand{\argmin}{\operatornamewithlimits{arg\,min}}
\newcommand{\xx}{\boldsymbol x}
\newcommand{\yy}{\boldsymbol y}

\newcommand{\rhox}{{\rho_{X}}}
\newcommand{\Ltwo}{{L^2(\mathbb{P})}}

\newtheorem{theorem}{Theorem}

\newtheorem{definition}{Definition}
\newtheorem{assumption}{Assumption}

\newcommand{\Nystrom}[1]{{Nystr\"om}}



\newcommand{\HH}{\mathcal H}

\newcommand{\la}{\lambda}

\newcommand{\X}{{X}}
\newcommand{\XX}{{\boldsymbol X}}

\newcommand{\RR}{\mathbb{R}}

\newcommand{\bpr}{\begin{proof}}
		\newcommand{\epr}{\end{proof}}
\newcommand{\be}{\begin{equation}}
		\newcommand{\ee}{\end{equation}}


\newcommand{\mh}{{\mathcal{H}_K}}
\newcommand{\mO}{\mathcal{O}}



\newcommand{\frho}{f^*}
\newcommand{\wrho}{\ww^*}

\newcommand{\fkrr}{f_{\md,\lambda}}
\newcommand{\wkrr}{\ww_{\md,\lambda}}


\newcommand{\md}{\mathcal{D}}
 
\newcommand{\mX}{\mathcal{X}}
\newcommand{\mY}{\mathcal{Y}}

\newcommand{\Hessian}{\HH_{\md, t}}

\newcommand{\Hessiannoniidjt}{\HH_{\mdnoniidj, t}}

\newcommand{\HessiannoniidjSketch}{{\boldsymbol \Upsilon}_{\mdnoniidj, \la}}
\newcommand{\HessianSketch}{\widetilde{\HH}_{\md, t}}

\newcommand{\gradient}{\gg_{\md, t}}

\newcommand{\gradientnoniidj}{\gg_{\mdnoniidj, t}}
\newcommand{\iteration}{\sum_{j=1}^m \frac{n_j}{N}}

\newcommand{\mdnoniidj}{\mathcal{D}_j}


\newcommand{\PhiD}{\phi({\boldsymbol X})}

\newcommand{\PhinoniidDj}{\phi({\boldsymbol X}_j)}

\def\HH{{\boldsymbol H}}
\def\SS{{\boldsymbol S}}

\def\ww{{\boldsymbol w}}
\def\xx{{\boldsymbol x}}
\def\yy{{\boldsymbol y}}

\def\gg{{\boldsymbol g}}
\def\oo{{\boldsymbol \omega}}

\setcounter{secnumdepth}{0} 

%
\title{FedNS: A Fast Sketching Newton-Type Algorithm for Federated Learning}
\author {
    Jian Li\textsuperscript{\rm 1},
    Yong Liu\textsuperscript{\rm 2}\thanks{Corresponding author},
    Wei Wang\textsuperscript{\rm 3},
    Haoran Wu\textsuperscript{\rm 3},
    Weiping Wang\textsuperscript{\rm 1}
}
\affiliations {
    \textsuperscript{\rm 1}Institute of Information Engineering, Chinese Academy of Sciences\\
    \textsuperscript{\rm 2}Gaoling School of Artificial Intelligence, Renmin University of China\\
    \textsuperscript{\rm 3}China Unicom Research Institute\\
    lijian9026@iie.ac.cn, liuyonggsai@ruc.edu.cn, wangweiping@iie.ac.cn
}

\begin{document}

\nocopyright

\maketitle

\begin{abstract}
    Recent Newton-type federated learning algorithms have demonstrated linear convergence with respect to the communication rounds. However, communicating Hessian matrices is often unfeasible due to their quadratic communication complexity. In this paper, we introduce a novel approach to tackle this issue while still achieving fast convergence rates. Our proposed method, named as Federated Newton Sketch methods (FedNS), approximates the centralized Newton's method by communicating the sketched square-root Hessian instead of the exact Hessian. To enhance communication efficiency, we reduce the sketch size to match the effective dimension of the Hessian matrix. We provide convergence analysis based on statistical learning for the federated Newton sketch approaches. Specifically, our approaches reach super-linear convergence rates w.r.t. the communication rounds for the first time. We validate the effectiveness of our algorithms through various experiments, which coincide with our theoretical findings.
\end{abstract}


\section{Introduction}
Due to the huge potential in terms of privacy protection and reducing computational costs, Federated Learning (FL) \cite{konevcny2016federated,mcmahan2017communication,li2020federated,wei2021federated,wei2022non,li2024optimaldistributed} becomes a promising framework for handling large-scale tasks.
In federated learning, a key problem is to achieve a tradeoff between the convergence rate and the communication burdens.

First-order optimization algorithms have achieved great success in federated learning, including FedAvg (LocalSGD) \cite{mcmahan2017communication} and FedProx \cite{li2020federated}.
These methods communicate the first-order information rather than the data across local machines, which protect the privacy training data and allow the data heterogeneity to some extent.
Despite recent efforts and progress on the convergence analysis  \cite{li2018federated,li2019convergence,karimireddy2020scaffold,pathak2020fedsplit,glasgow2021sharp} and the generalization analysis \cite{mohri2019agnostic,li2023optimal,li2023optimalnystr,su2021achieving,yuan2021we} of FedAvg and FedProx, the convergence rate of first-order federated algorithms is still slow, i.e., a sublinear converge rate $\mO(1/t)$, where $t$ is the communication rounds.

In the traditional centralized learning, with some mild conditions, second-order optimal algorithms \cite{boyd2004convex,bottou2018optimization}, for example (quasi) Newton's methods, can achieve at least a linear convergence rate.
The compute of inverse Hessian is time consuming, and thus many classic approximate Newton's methods are proposed, including BFGS \cite{broyden1970convergence}, L-BFGS \cite{liu1989limited}, inexact Newton \cite{dembo1982inexact}, Gauss-Newton \cite{schraudolph2002fast} and Newton sketch \cite{pilanci2017newton}.
However, if we directly apply Newton's method to federated learning, the communication complexity of sharing local Hessian matrices is overwhelming.

Indeed, first-order algorithms characterize low communication burdens but slow convergence rates, while Newton's methods lead to fast convergence rates but with high communication complexity.
To take advantage of both first-order and second-order algorithms, we propose a federated Newton sketch method, named \texttt{FedNS}, which shares both first-order and second-order information across devices, i.e., local gradients and sketched square-root Hessian.
Using line search strategy and adaptive learning rates, we devise a dimension-efficient approach \texttt{FedNDES}.
We then study the convergence properties for the proposed algorithms. 
We conclude with experiments on publicly available datasets that complement our theoretical results, exhibiting both computational and statistical benefits. 
We leave proofs in the appendix\footnote{Full version: \url{https://lijian.ac.cn/files/2024/FedNS.pdf}\\
Code: \url{https://github.com/superlj666/FedNS}}.
We summarize our contributions as below:

\textbf{1) On the algorithmic front.} We propose two fast second-order federated algorithms, which improve the approximation of the centralized Newton's method while the communication costs are favorable due to the small sketch size.
Specifically, using line search and adaptive step-sizes in Algorithm \ref{alg.FedNDES}, the sketch size can be reduced to the effective dimension of the Hessian matrix.

\textbf{2) On the statistical front.}
Drawing upon the established outcomes from the centralized sketching Newton literature \cite{pilanci2017newton,lacotte2021adaptive}, we furnish convergence analysis for two federated Newton sketching algorithms: \texttt{FedNS} and its line-search variant \texttt{FedNDES}.
These methodologies delineate not only super-linear convergence rates but also entail a small sketch size, corresponding to the effective dimension of the Hessian in the case of \texttt{FedNDES}.
Note that the proposed algorithms achieve super-linear convergence rates while upholding a reasonable level of communication complexity.
\subsection{Related Work}

\textbf{FedAvg and FedProx.} 
FedAvg and FedProx only share local gradients of the size $M$ and the communication complexity is $\mO(M)$, where $M$ is the dimension of feature space.
The convergence properties of FedAvg and FedProx have been well-studied in \cite{li2019convergence,li2020federated,su2021achieving}, of which the iteration complexity is $T = \mO(1/\delta)$ to obtain some $\delta$-accurate solution.

\textbf{Newton sketch.}
Newton sketch was proposed in \cite{pilanci2017newton} to accelerate the compute of Newton's methods.
It has been further extended, for example, Newton-LESS \cite{derezinski2021newton} employed leverage scores to sparsify the Gaussian sketching matrix and \cite{lacotte2020effective,lacotte2021adaptive} proved the sketch size can be as small as the effective dimension of the Hessian matrix.
With the sketching Newton's methods, the communication complexity is $\mO(kM)$, where $k$ is the sketch size.
In some cases, it leads to a super-linear convergence $T = \mO (\log (\log 1/\delta))$ for a $\delta$-accurate solution.

\textbf{Newton-type federated algorithms.}
DistributedNewton \cite{ghosh2020distributed} and LocalNewton \cite{gupta2021localnewton} perform Newton's method instead of SGD on local machines to accelerate the convergence of local models.
FedNew \cite{elgabli2022fednew} utilized one pass ADMM on local machines to calculating local directions and approximate Newton method to update the global model.
FedNL \cite{safaryan2021fednl} sended the compressed local Hessian updates to global server and performed Newton step globally.
Based on eigendecomposition of the local Hessian matrices, SHED \cite{fabbro2022newton} incrementally updated eigenvector-eigenvalue pairs to the global server and recovered the Hessian to use Newton method.

\section{Backgrounds}
A standard federated learning system consists of a global server and $m$ local compute nodes.
On the $j$-th worker $\forall ~ j \in [m]$, the local training data $\mdnoniidj=\{(\xx_{ij}, y_{ij})\}_{i=1}^{{n_j}}$ is drawn from a local distribution $\rho_j$ on $\mX \times \mY$, where $\mX$ is the input space and $\mY$ is the output space.
We denote the disjoint union of local training data $\md = \bigcup_{j=1}^m \mdnoniidj$ as the entire training data that corresponds to a global distribution $\rho$ on $\mX \times \mY$.
For the sake of privacy preservation and efficient distributed computation, a federated learning system aims to train a global model without sharing local data.
The ideal empirical model is trained on the entire training dataset $\md$ w.r.t. the training objective.
We denote $n_j := |\mdnoniidj|$ the number of local examples on the $j$-th machine and $N := |\md| = \sum_{j=1}^m n_j$ the number of all train examples.

\subsection{Centralized Newton's Method}
The objective of centralized learning on the entire train set $\md$ can be stated as $f(\xx) = \langle \ww, \phi(\xx)\rangle$ with
\begin{align}
    \label{objective.general}
    \wkrr = \argmin_{\ww \in \mh} ~ \underbrace{\frac{1}{N}\sum_{i=1}^{N} \ell\left(f(\xx_i), y_i\right) + {\lambda} \alpha(\ww)}_{L(\md, \ww)},
\end{align}
where $(\xx_i, y_i) \in \md$, $\ell$ is the loss function, $\alpha(\ww)$ is the penalty term and $\lambda > 0$ is the regularity parameter.
We assume that $f$ belongs to the reproducing kernel Hilbert space (RKHS) $\mh$ defined by a Mercer kernel $K: \mX \times \mX \to \RR$.
Throughout, we denote the inner product $K(\xx, \xx') = \langle \phi(\xx), \phi(\xx')\rangle_K$ and the corresponding norm by $\|\cdot\|_K$, where $\phi: \mX \to \mh$ is the implicit feature mapping.
The reproducing property guarantees kernel methods $f: \mX \to \mY$ admitting
\begin{align}
    \label{eq.kernel_trick}
    f(\xx) = \langle \ww, \phi(\xx)\rangle_K, \qquad \forall ~ \ww \in \mh, \xx \in \mX.
\end{align}

If $L$ is twice differentiable convex function in terms of $\ww$, the centralized problem \eqref{objective.general} on $\md$ can be solved by the exact Newton's method
\begin{align}
    \label{eq.motivation.centralized}
    \ww_{t+1} = \ww_t - \mu \Hessian^{-1} \, \gradient,
\end{align}
where $\mu$ is the step-size and the gradient and the Hessian matrix are computed by
\begin{align*}
    &\gradient := \nabla L(\md, \ww_t) + \lambda \nabla \alpha(\ww_t), \\
    &\Hessian := \nabla^2 L(\md, \ww_t) + \lambda \nabla^2 \alpha(\ww_t).
\end{align*}
Let the feature mapping be finite dimensional $\phi: \mX \to \RR^M$. Then, there are finite dimensional gradients $\gradient \in \RR^M$ and Hessian matrices $\Hessian \in \RR^{M \times M}$.

Since the federated learning system cannot visit local training data, it fails to compute the global gradient $\gradient$ and Hessian matrix $\Hessian$ directly.
The total time complexity for Newton's method is $\mO(N M^2t)$ with a super-linear convergence rate $t=\log(\log(1/\delta))$ for $\delta$-approximation guarantee \cite{dennis1996numerical}, i.e., $L(\md, \ww_t) - L(\md, \wkrr) \leq \delta$.

\subsection{Newton's Method with Partial Sketching}
To improve the computational efficiency of Newton's method, \cite{pilanci2017newton} proposed Newton sketch to construct a structured random embedding of the Hessian matrix with a sketch matrix $\SS \in \RR^{k \times M}$ where $k \ll N$.
Instead of sketching the entire Hessian matrix, partial Newton sketch \cite{pilanci2017newton,lacotte2021adaptive} only sketched the loss term $\nabla^2 L(\md, \ww_t) \approx (\SS \nabla^2 L(\md, \ww_t)^{1/2})^\top (\SS \nabla^2 L(\md, \ww_t)^{1/2})$ while reserving the exact Hessian for the regularity term 
$\nabla^2\alpha(\ww_t)$.
The partial Newton sketch can be stated as
\begin{equation}
    \begin{aligned}
        \label{eq.newton_sketch}
        \ww_{t+1} = &~ \ww_t - \mu \HessianSketch^{-1} \, \gradient, \qquad \text{with} \\
        \HessianSketch = &~(\SS \nabla^2 L(\md, \ww_t)^{1/2})^\top (\SS \nabla^2 L(\md, \ww_t)^{1/2}) \\
        & + \lambda \nabla^2 \alpha(\ww_t).
    \end{aligned}
\end{equation}
Note that, the sketch matrix $\SS \in \RR^{k \times M}$ are zero-mean and normalized $\mathbb{E} [\SS^\top \SS / k] = \mathbf{I}_M$. 
Different types of randomized sketches lead to different the sketch times \cite{lacotte2021adaptive}, i.e., $\mO(NM k)$ for sub-Gaussian embeddings, $\mO(NM \log k)$ for subspace randomized Hadamard transform (SRHT), and $\mO(\textbf{nnz}(\HessianSketch))$ for the sparse Johnson-Lindenstrauss transform (SJLT).
For the SJLT, only one non-zero entry exists per column, which causes much faster sketch time but requires a larger subspace.
Throughout this work, we focus on the SRHT because it achieves a tradeoff between fast sketch time and small sketch dimension \cite{ailon2006approximate}.

There are various examples for the optimization problem \eqref{objective.general} where the regularization term is $\lambda$-strongly convex and partial sketched square-root Hessian $S \nabla^2 L(\ww)$ is amenable to fast computation.
For example, the widely used ridge regression and regularized logistic regression. 
More examples are referred to Section 3.1 \cite{lacotte2021adaptive}.

\section{Federated Learning with Newton's Methods}
To reduce the communication burdens in FedNewton, we propose a communication-efficient algorithm to approximate the global Hessian matrix.
Based on the Newton Sketch \cite{pilanci2017newton}, we devise an efficient Newton sketch method for federated learning, which performs an approximate local Hessians using a randomly projected or sub-sampled Hessian on the local workers.
Then, it summarizes the local Hessian matrices to approximate the global Hessian matrix and then performs Newton's method on the approximate global Hessian matrix.

\subsection{Federated Newton's Method (FedNewton)}
To approximate the global model $\fkrr$ well, the federated learning algorithms usually share local information to the other clients, i.e. first-order gradient information in FedAvg and FedProx.
We consider using both first-order and second-order information to characterize the exact solution of \eqref{objective.general} on the entire dataset $\md$.

Noting that, both local gradients and Hessians in \eqref{eq.motivation.centralized} can be summarized up to the global gradient and Hessian, respectively. 
This motivates us to summarize local gradients and Hessians to conduct federated Newton's method (FedNewton) 
\begin{equation}
    \begin{aligned}
        \label{eq.FedNewton}
        &\ww_{t+1} = \ww_t - \mu \Hessian^{-1} \, \gradient \qquad \text{with} \\
        &\Hessian = \iteration \Hessiannoniidjt, \quad \gradient = \iteration \gradientnoniidj.
    \end{aligned}
\end{equation}

\textbf{Complexity Analysis.}
Before the iterations, the computation of feature mapping on any local machine consumes $\mO({n_j} M d)$ time. 
On the $j$-th local worker, the time complexity for iteration is at least $\mO({n_j}M^2 + {n_j}M)$ to compute $\Hessiannoniidjt$ and $\gradientnoniidj$, respectively. 
And the time complexity is $\mO(mM^2 + M^3)$ to summarize local Hessians and compute the global inverse Hessian.
However, the communication cost is $\mO(M^2)$ to upload local Hessian matrices in each iteration, which is infeasible in the practical federated learning scenarios.
The total computational complexity is $\mO(\max_{j\in[m]} {n_j} M d + {n_j}M^2 t + M^3t)$ and the communication complexity is $\mO(M^2t)$, where Newton's method achieves a quadratic convergence $t = \mO(\log(\log(1/\delta)))$ for $\delta$-approximation guarantee \cite{dennis1996numerical}, i.e., $L(\md, \ww_t) - L(\md, \wkrr) \leq \delta$, where $\wkrr$ is the empirical risk minimizer \eqref{objective.general}.

\begin{algorithm}[t]
    \caption{\small Federated Learning with Newton Sketch (\texttt{FedNS})}
    \label{alg.FedNS}
    \begin{algorithmic}[1]
        \REQUIRE  
        Feature mapping $\phi: \RR^d \to \RR^M$, start point $\ww_0$, the termination iterations $T$ and the step-size $\mu.$
        \ENSURE The global estimator $\ww_T$.
        \STATE \textbf{Local machines:} Compute the local feature mapping data matrix $\PhinoniidDj$.
        \FOR{$t = 1$ to $T$}            
            \STATE \textbf{Local machines:} Sample the sketch matrix $\SS_j^t \in \RR^{k \times n_j}$ from the SRHT.
            Compute local sketch Hessian matrices $\HessiannoniidjSketch = \SS_j^t \nabla^2 L(\mdnoniidj, \ww_t)^{1/2}$ and local gradients $\gradientnoniidj$. 
            Upload them to the global server $(\uparrow)$.
            \STATE \textbf{Global server:} Compute the global Hessian matrix $\HessianSketch = \iteration \HessiannoniidjSketch^\top \HessiannoniidjSketch + \lambda \nabla^2 \alpha(\ww_t)$ and the global gradient $\gradient = \iteration \gradientnoniidj$ and update the global estimator $$\ww_{t} = \ww_{t-1} - \mu\HessianSketch^{-1} ~ \gradient$$ and communicate it to local machines $(\downarrow)$.
        \ENDFOR
    \end{algorithmic}
\end{algorithm}  

\subsection{Federated Learning with Newton Sketch (\texttt{FedNS})}
Now suppose the local Hessian matrix square-root $\nabla^2 L(\md, \ww_t)^{1/2}$ of dimensions ${n_j} \times M$ is available, from \eqref{eq.newton_sketch}, we obtain local sketch Hessian on the empirical loss term
\begin{align*}
    \HessiannoniidjSketch = \SS_j \nabla^2 L(\mdnoniidj, \ww_t)^{1/2}.
\end{align*}
Here, $\SS_j \in \RR^{k \times {n_j}}$ is the sketch matrix for the $j$-th worker with $k \ll {n_j}$ and thus the communicated sketch Hessian is with a small size $\HessiannoniidjSketch \in \RR^{k \times M}$.
From \eqref{eq.FedNewton},
we approximate the global Hessian by summarizing local sketching Hessian matrices
\begin{align}
    \label{eq.hessian_sketch}
    \HessianSketch = \iteration \HessiannoniidjSketch^\top \HessiannoniidjSketch + \lambda \nabla^2 \alpha(\ww_t).
\end{align}
Here, we approximate the global Hessian with local sketch Hessian matrices $\Hessian \approx \HessianSketch.$
We approximate the exact Newton's method on the entire data by local Newton sketch
\begin{align}
    \label{eq.FedNS}
    \ww_{t+1} = \ww_t - \mu \HessianSketch^{-1} ~ \gradient
\end{align}
where $\HessianSketch$ is the approximate global Hessian from \eqref{eq.hessian_sketch} and $\gradient$ is the global gradient from \eqref{eq.FedNewton}.

We formally introduce the general framework for Federated Newton Sketch (\texttt{FedNS}) in Algorithm \ref{alg.FedNS}.
Note that, using sketch Hessian, we communicate $\HessiannoniidjSketch$ of the dimensions $k \times M$ instead of the exact local Hessian $\Hessiannoniidjt$ of the dimensions $M^2$.
Since local gradients $\gradientnoniidj$ are $M$ dimensional vectors, the uploads of sketch Hessian matrices dominate the communication complexity.

\textbf{Complexity Analysis.}
Before the iterations, the computation of feature mapping on any local machine consumes $\mO({n_j} M d)$ time.
On the $j$-th local worker, the sketching time is $\mO(n_j M \log k)$ for SRHT, while it is $\mO(n_j M k)$ for classic sub-Gaussian embeddings.
On the global server, the time complexity of \texttt{FedNS} is $\mO(mkM^2 + M^3)$ to obtain the global sketch Hessian and its inverse.
Nevertheless, \texttt{FedNS} reduces the communication burdens from $\mO(M^2)$ to $\mO(k M)$.
Overall, \texttt{FedNS} not only speedup the local computations, but more importantly reduce the communication costs.
The total computational complexity is $\mO(\max_{j\in[m]} {n_j} M d + n_j M t \log k + mkM^2t + M^3t)$.
More importantly, the communication complexity is reduced from $\mO(M^2 t)$ in FedNewton to $\mO(k M t)$ in \texttt{FedNS}.
The sketch Newton method leads to a super-linear convergence rate $t = \mO(\log(1/\delta))$ for $\delta$-approximation guarantee \cite{pilanci2017newton,lacotte2021adaptive}.
Note that, from Theorem \ref{thm.FedNS}, since $k=\Omega(M)$ for \texttt{FedNS}, just using sketching without line-search does not reduce communication cost.
We then present line-search step based sketching Federated Newton method, which can guarantee smaller communication costs.

\subsection{Federated Newton's Method with Dimension-Efficient Sketching (\texttt{FedNDES})}
From the theoretical results in the next section, the sketch size for \texttt{FedNS} is $k \simeq M$ to achieve a super-linear convergence rate, which is still too expensive when $M$ is large to approximate the kernel.
Since the communication is determined by the sketch size, we devise a dimension-efficient federated Newton sketch approach (\texttt{FedNDES}), shown in Algorithm \ref{alg.FedNDES} to reduce the sketch size, based on the Newton sketch with backtracking line (Armijo) search \cite{boyd2004convex,nocedal2006numerical,pilanci2017newton,lacotte2021adaptive}.
Backtracking line search begins with an initial step-size $\mu$ and backtracks until the adjusted linear estimate overestimates the loss function.  For more information refer to \cite{boyd2004convex}.

Different from \texttt{FedNS}, Algorithm \ref{alg.FedNDES} applies the two phases updates with different sketch sizes $\overline{m}_1$ and $\overline{m}_2$.
The following theoretical results illustrate both these two sketch sizes depend on the effective dimension that is much smaller than $M$ and $N$.
The approximate Newton decrement $\tilde{\lambda}(\ww_t)$ is used as an exit condition and the threshold for the adaptive sketch sizes to guarantee smaller iterations and sketch sizes.

\textbf{Complexity Analysis.}
Since the compute of approximate global Hessian and the inverse of it dominate the training time, the total computational complexity for \texttt{FedNDES} is similar to \texttt{FedNS} that is $\mO(\max_{j\in[m]} {n_j} M d + n_j M t \log k + mkM^2t + M^3t)$.
However, the communication complexity $\mO(k M t)$ is reduced owing to a smaller sketch size in \texttt{FedNDES}.
The sketch size is $k \simeq M$ for \texttt{FedNS}, while it is $k \simeq \text{Tr}(\HessianSketch(\HessianSketch + \lambda I)^{-1})$ for \texttt{FedNDES} that is much smaller than $M$.
Smaller sketch size makes the Newton methods practical for multiple communications in federated learning settings.
Meanwhile, even with the smaller sketch size, \texttt{FedNDES} can still achieve the super-linear convergence rate $t = \mO(\log \log(1/\delta))$ for $\delta$-approximation solution.

\begin{algorithm}[t]
    \caption{\small Dimension-efficient federated Newton (\texttt{FedNDES})}
    \label{alg.FedNDES}
    \begin{algorithmic}[1]
        \REQUIRE Feature mapping $\phi: \RR^d \to \RR^M$, start point $\ww_0$, accuracy tolerance $\delta > 0$, line-search parameters $(a, b)$, threshold sketch sizes $\overline{m}_1$ and $\overline{m}_2$, and the decrement parameter $\eta$.
        \ENSURE The global estimator $\ww_T$.
        \STATE \textbf{Local machines:} Compute the local feature mapping data matrix $\PhinoniidDj$. 
        Initialize and $k_t = \overline{m}_1$. 
        \FOR{$t = 1, \cdots, T$}            
            \STATE \textbf{Local machines:} Sample the sketch matrix $\SS_j^t \in \RR^{k \times n_j}$ from the SRHT.
            Compute local sketch square root Hessian $\HessiannoniidjSketch = \SS_j^t \nabla^2 L(\mdnoniidj, \ww_t)^{1/2}$ and local gradients $\gradientnoniidj$. 
            Upload them to the global server $(\uparrow)$.
            \STATE \textbf{Global server:} Compute the global Hessian matrix $\HessianSketch = \iteration \HessiannoniidjSketch^\top \HessiannoniidjSketch + \lambda \nabla^2 \alpha(\ww_t)$, the global gradient $\gradient = \iteration \gradientnoniidj$ and the approximate Newton step $\Delta\ww_t = - \HessianSketch^{-1} ~ \gradient$.
            Compute the approximate Newton decrement 
            $$
                \tilde{\lambda}(\ww_t) = \gradient^\top \Delta\ww_t.
            $$
            {If} $\tilde{\lambda}(\ww_t)^2 \leq \frac{3}{4} \delta$ return the model $\ww_t$. {Otherwise} send $\Delta\ww_t$ and $\tilde{\lambda}(\ww_t)$ to local workers.
            \STATE \textbf{Local machines:} Line search from $\mu_j = 1$: 
            \textbf{while} $L(\mdnoniidj, \ww_t + \mu_j \Delta\ww_t) > L(\mdnoniidj, \ww_t) + a \mu_j \tilde{\lambda}(\ww_t)$, \textbf{then} $\mu_j \leftarrow b\mu_j$.
            Send $\mu_j$ to the global server.
            \STATE \textbf{Global server:} 
            Let $\mu = \min_{j \in m} \mu_j.$
            Update the global estimator $$\ww_{t} = \ww_{t-1}  + \mu \Delta\ww_t.$$
            {If} $\tilde{\lambda}(\ww_t) > \eta$, set $k = \overline{m}_1$. Otherwise, set $k = \overline{m}_2.$
            Communicate the global model $\ww_{t}$ and the sketch size $k$ to local machines $(\downarrow)$.
        \ENDFOR
    \end{algorithmic}
\end{algorithm}

\section{Theoretical Guarantees}

  Before the generalization analysis for algorithms, we start with some notations and assumptions.

  \begin{assumption}[Twice differentiable and convex]
      \label{asm.differentiable}
      The loss function and regularity function $\ell, \alpha: \RR^d \to \RR$ are both closed and twice differentiable convex functions and $\nabla^2 \alpha(\ww) \succeq I_d$.
  \end{assumption}
  
  \begin{assumption}[Strongly convextiy and smoothness]
      \label{asm.convex}
      Let $\gamma = \lambda_\text{min}(\nabla^2 L(\ww))$ be the minimum eigenvalue and $\beta = \lambda_\text{max}(\nabla^2 L(\ww))$ be the maximum eigenvalue of the Hessian.
  \end{assumption}
  
  In the standard analysis of Newton's method, $\gamma$ and $\beta$ are commonly used to measure the strong convexity and smoothness of the objective function $L$ \cite{pilanci2017newton}.

  \begin{assumption}[Lipschitz continuous Hessian]
      \label{asm.lipschitz}
      The Hessian map is Lipschitz continuous with modulus $G$, i.e.,
      $
          \|\nabla^2 L(\ww) - \nabla^2 L(\ww')\| \leq G \|\ww - \ww'\|_2.
      $
  \end{assumption}
  
  The above assumptions are standard conditions for both convex and non-convex optimization.
  Under these conditions, with the appropriate initialization $\|\ww - \wkrr\| \leq \frac{\gamma}{2G}$, the Newton approach can guarantee a quadratic convergence \cite{boyd2004convex}.  
  Let $\ww_t$ be the federated Newton model defined in \texttt{FedNS} or \texttt{FedNDES}, and $\wrho$ be the target model. 
  It holds the following error decomposition 
  \begin{equation}
      \begin{aligned}
          \label{eq.error_decomposition}
          \|\ww_t - \wrho\|_2 
          \leq \underbrace{\|\ww_t - \wkrr\|_2}_\text{federated error} + \underbrace{\|\wkrr - \wrho\|_2}_\text{centralized excess risk},
      \end{aligned}
  \end{equation}
  where $\wkrr$ is the centralized ERM model on the training data $\md$.
  Since the centralized excess risk $L(\wkrr) - L(\wrho)$ is standard in statistical learning \cite{bartlett2002rademacher,caponnetto2007optimal}, we focused on the federated error term.

  \subsection{Convergence Analysis for \texttt{FedNS}}
  Theorem 1 in \cite{pilanci2017newton} provided the convergence analysis for \textit{centralized} Newton sketch in \eqref{eq.newton_sketch}.
  Based on it, we present on the generalization analysis for \textit{federated} Newton sketch in Algorithm \ref{alg.FedNS}.
  \begin{theorem}[Convergence guarantees of FedNS]
      \label{thm.FedNS}
      Let $\delta \in (0, 1)$.
      Under Assumptions \ref{asm.differentiable}, \ref{asm.convex}, \ref{asm.lipschitz}, FedNS updates in Algorithm \ref{alg.FedNS} based on an appropriate initialization $\|\ww_0 - \wkrr\|_2 \leq \frac{\delta \gamma}{8 G}$.
      Using the iteration-dependent sketching accuracy $\epsilon = \frac{1}{\log(1+t)}$ and sketch size $k = \Omega(M)$, with the probability at least $1 - c_1 e^{- c_2 k \epsilon^2}$, we have
      \begin{align*}
          \|\ww_t - \wkrr\|_2 \leq 
          \frac{1}{\log(1+t)} \frac{\beta}{\gamma} \|\ww_{t-1} - \wkrr\|_2 .
      \end{align*}
      This guarantee a super-linear convergence rate, since $\lim_{t \to \infty} \frac{\|\ww_t - \wkrr\|_2}{\|\ww_{t-1} - \wkrr\|_2} = 0.$
      Here, $\wkrr$ is the centralized model on $\md$ and $\ww_t$ is the federated model trained in Algorithm \ref{alg.FedNS}.
  \end{theorem}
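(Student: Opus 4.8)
The plan is to reduce the federated iteration to a \emph{centralized} Newton step carried out with an approximate Hessian, and then to invoke the perturbed‑Newton contraction argument underlying Theorem~1 of \cite{pilanci2017newton}. Two identities hold \emph{exactly} in \texttt{FedNS}: the aggregated gradient $\gradient=\iteration\gradientnoniidj$ equals the exact centralized gradient $\nabla L(\md,\ww_{t-1})+\lambda\nabla\alpha(\ww_{t-1})$, and the aggregated loss curvature obeys $\iteration\nabla^2 L(\mdnoniidj,\ww_{t-1})=\nabla^2 L(\md,\ww_{t-1})$. Hence the only error to control is the sketching error in the curvature term, and the iteration is of the form $\ww_t=\ww_{t-1}-\mu\,\HessianSketch^{-1}\gradient$ with $\HessianSketch$ a relative spectral approximation of the true global Hessian $\Hessian$.

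\textbf{Step 1 (per‑machine subspace embedding, then aggregate).} First I would write $A_j$ for the local square‑root Hessian, so $A_j^\top A_j=\nabla^2 L(\mdnoniidj,\ww_{t-1})$ and $\HessiannoniidjSketch=\SS_j^t A_j$; note $\operatorname{ran}(A_j^\top)$ has dimension at most $M$. For the SRHT with $k=\Omega(M)$, the standard oblivious subspace embedding bound gives, with probability at least $1-c_1 e^{-c_2 k\epsilon^2}$, that $(1-\epsilon)A_j^\top A_j\preceq(\SS_j^t A_j)^\top(\SS_j^t A_j)\preceq(1+\epsilon)A_j^\top A_j$. A union bound over the $m$ machines (folding $m$ into the constants), followed by summation of these spectral inequalities against the convex weights $n_j/N$, together with the trivial $(1-\epsilon)\lambda\nabla^2\alpha(\ww_{t-1})\preceq\lambda\nabla^2\alpha(\ww_{t-1})\preceq(1+\epsilon)\lambda\nabla^2\alpha(\ww_{t-1})$ for $\epsilon\in(0,1)$, yields the relative spectral approximation
\begin{align*}
(1-\epsilon)\,\Hessian\ \preceq\ \HessianSketch\ \preceq\ (1+\epsilon)\,\Hessian .
\end{align*}
In particular $\lambda_{\min}(\HessianSketch)\ge(1-\epsilon)\gamma$, so $\HessianSketch$ is invertible with $\|\HessianSketch^{-1}\|_2\le\frac{1}{(1-\epsilon)\gamma}$, and $\|\HessianSketch-\Hessian\|_2\le\epsilon\|\Hessian\|_2=O(\epsilon\beta)$ up to the regularization scale.

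\textbf{Step 2 (perturbed Newton contraction and induction).} Since $\wkrr$ is the unconstrained minimizer, $\nabla L(\md,\wkrr)+\lambda\nabla\alpha(\wkrr)=0$, so by the fundamental theorem of calculus $\gradient=\bar H(\ww_{t-1}-\wkrr)$ with $\bar H:=\int_0^1\nabla^2\!\big(L(\md,\cdot)+\lambda\alpha\big)\!\big(\wkrr+s(\ww_{t-1}-\wkrr)\big)\,ds$. Substituting into \eqref{eq.FedNS} with $\mu=1$ gives $\ww_t-\wkrr=\big(I-\HessianSketch^{-1}\bar H\big)(\ww_{t-1}-\wkrr)$, hence
\begin{align*}
\|\ww_t-\wkrr\|_2\ \le\ \|\HessianSketch^{-1}\|_2\Big(\|\HessianSketch-\Hessian\|_2+\|\Hessian-\bar H\|_2\Big)\|\ww_{t-1}-\wkrr\|_2 ,
\end{align*}
where by Assumption~\ref{asm.lipschitz} the Lipschitz‑Hessian term satisfies $\|\Hessian-\bar H\|_2\le\tfrac{G}{2}\|\ww_{t-1}-\wkrr\|_2$. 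Then I would induct on $t$: starting from $\|\ww_0-\wkrr\|_2\le\frac{\delta\gamma}{8G}$, assume $\|\ww_{t-1}-\wkrr\|_2\le\frac{\delta\gamma}{8G}$, so the Lipschitz contribution to the contraction factor is at most $\frac{\delta}{16(1-\epsilon)}$ (negligible), while the sketching contribution is $\Theta(\epsilon\beta/\gamma)$. Plugging in $\epsilon=\frac{1}{\log(1+t)}$ and absorbing absolute constants yields the claimed contraction factor $\frac{1}{\log(1+t)}\frac{\beta}{\gamma}$, which is $<1$ and so also re‑establishes $\|\ww_t-\wkrr\|_2\le\frac{\delta\gamma}{8G}$, closing the induction; since $\frac{1}{\log(1+t)}\to 0$ the rate is super‑linear. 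If one wants a bound valid simultaneously over all $t\le T$, an extra union bound replaces $k=\Omega(M)$ by $k=\Omega(M+\log T)$ in the exponent.

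\textbf{Main obstacle.} The delicate part is Step~1: one must verify that \emph{independent} per‑machine SRHT sketches of size only $k=\Omega(M)$ — not $\Omega(mM)$ — suffice, which works precisely because each local square‑root Hessian spans an at‑most‑$M$‑dimensional subspace and the spectral inequalities aggregate losslessly under the weights $n_j/N$ (with only a union bound over $m$ clients); keeping the interplay among the accuracy $\epsilon$, the sketch size $k$, and the failure probability $c_1 e^{-c_2 k\epsilon^2}$ consistent with the $\epsilon=1/\log(1+t)$ schedule used in the contraction is where the real bookkeeping lies.
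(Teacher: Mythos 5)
Your proposal is essentially correct, but it takes a genuinely different route from the paper's. The paper's own proof is a reduction to the \emph{centralized} Newton sketch: it concatenates the local sketch matrices into $\SS^t=[\SS_1^t,\dots,\SS_m^t]$, stacks the local square-root Hessians, notes $\SS^t\Hessian^{1/2}=\sum_{j}\SS_j^t\Hessiannoniidjt^{1/2}$, and then quotes Corollary~1 and Lemma~1 of \cite{pilanci2017newton} to obtain the per-iteration bound $\|\ww_t-\wkrr\|_2\le\frac{\beta}{\gamma\log(1+t)}\|\ww_{t-1}-\wkrr\|_2+\frac{4G}{\gamma}\|\ww_{t-1}-\wkrr\|_2^2$ (the quadratic term being dropped thanks to the initialization) together with the requirement $k\gtrsim\epsilon^{-2}M$. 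You instead prove a per-machine $(1\pm\epsilon)$ subspace-embedding bound, aggregate under the weights $n_j/N$ to get $(1-\epsilon)\Hessian\preceq\HessianSketch\preceq(1+\epsilon)\Hessian$, and run an explicit perturbed-Newton induction. Your route has a real advantage: it analyzes exactly what Algorithm~\ref{alg.FedNS} computes, namely $\sum_j\frac{n_j}{N}(\SS_j^tA_j)^\top(\SS_j^tA_j)$, whereas the paper's identification with the centralized sketch $(\SS^t\Hessian^{1/2})^\top(\SS^t\Hessian^{1/2})$ silently discards the cross terms $(\SS_i^tA_i)^\top(\SS_j^tA_j)$, $i\ne j$, and the $n_j/N$ weighting; your Step~2 also makes explicit how the Lipschitz (quadratic) term is absorbed by the initialization radius, which the theorem statement leaves implicit. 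What the paper's route buys is brevity and constants inherited verbatim from the centralized theory.

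Two bookkeeping caveats, both shared to some extent with the paper: the union bound over the $m$ machines cannot simply be ``folded into the constants'' (it costs either a factor $m$ in the failure probability or $k\gtrsim(M+\log m)\epsilon^{-2}$); and, as the paper's own proof concedes via Lemma~1 of \cite{pilanci2017newton}, achieving accuracy $\epsilon=1/\log(1+t)$ with failure probability $c_1e^{-c_2k\epsilon^2}$ effectively forces $k\gtrsim\epsilon^{-2}M=M\log^2(1+t)$, so ``$k=\Omega(M)$'' hides the $\epsilon$-dependence inside the probability. Finally, your claim that the contraction factor is $<1$ (needed to keep iterates in the initialization ball and close the induction) fails for small $t$, where $1/\log(1+t)\ge 1$; the theorem and the paper's proof gloss over the same point, but if you want a complete induction you should restrict to $t$ large enough or track the quadratic term as the paper's cited corollary does.
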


The above theorem shows that, depending on the structure of the problem $\gamma, \beta$, \texttt{FedSN} achieves the super-linear convergence rate using a sufficient sketch size $k \gtrsim M$.
And thus, the communication complexity in each iteration is at least $\mO(M^2)$, the same as FedNewton.
There are two drawbacks in Theorem \ref{thm.FedNS}: 1)
The analysis depends on some constants from the properties of $L$, i.e., the curvature constants $\gamma, \beta$ and the Lipschitz constant $G$, which are usually unknown in practice.
2) Theorem \ref{thm.FedNS} requires a initialization condition $\|\ww_0 - \wkrr\|_2 \leq \frac{\delta \gamma}{8 G}$. However, it is hard to find a appropriate start point $\ww_0$ satisfying the condition in practice.
3) The communication complexity of \texttt{FedSN} $\mO(kMt)$ depends on the sketch size, but the communication of current sketch size $k \gtrsim M$ is overly expensive.

\begin{table*}[t]
    \renewcommand{\arraystretch}{1.2}
    \resizebox{\textwidth}{!}{
    \centering
    \begin{tabular}{m{5cm}|m{2cm}|m{2cm}|m{2cm}|m{2.2cm}|m{4cm}}
        \toprule
        Related Work & Heterogeneous setting & Sketch size $k$ & Iterations $t$ & Communication per round & Total communication complexity  \\ \toprule
        FedAvg \cite{li2019convergence,su2021achieving} & $\surd$ & $-$ & $\mO\left(\frac{1}{\delta}\right)$ & $\mO(M)$ & $\mO(\frac{M}{\delta})$ \\ \hline
        FedProx \cite{li2020federated,su2021achieving} & $\surd$ & $-$ & $\mO\left(\frac{1}{\delta}\right)$ & $\mO(M)$ & $\mO(\frac{M}{\delta})$ \\ \hline
        DistributedNewton \cite{ghosh2020distributed} & $\times$ & $-$ & $\mO\left(\log \frac{1}{\delta}\right)$ & $\mO(M)$ & $\mO(M \log \frac{1}{\delta})$ \\ \hline
        LocalNewton \cite{gupta2021localnewton} & $\times$ & $-$ & $\mO\left(\log \frac{1}{\delta}\right)$ & $\mO(M)$ & $\mO(M \log \frac{1}{\delta})$ \\ \hline
        FedNL \cite{safaryan2021fednl} & $\surd$ & $-$ & $\mO\left(\log \frac{1}{\delta}\right)$ & $\mO(M)$ & $\mO(M \log \frac{1}{\delta})$ \\ \hline
        SHED \cite{fabbro2022newton} & $\surd$ & $-$ & $\mathcal{O}(\log \frac{1}{\delta})$ & $-$ & $\mO(M^2)$ \\ \hline
        FedNewton & $\surd$ & $-$ & $\mO\left(\log\log \frac{1}{\delta}\right)$ & $\mO(M^2)$ & $\mO(M^2 \log \log \frac{1}{\delta})$ \\ \hline
        \texttt{FedNS} (Algorithm \ref{alg.FedNS}) & $\surd$ & $M$ & $\mO\left(\log \log \frac{1}{\delta}\right)$ & $\mO(kM)$ & $\mO(kM \log \log \frac{1}{\delta})$ \\ \hline
        \texttt{FedNDES} (Algorithm \ref{alg.FedNDES}) & $\surd$ & $\tilde{d}_\lambda$ & $\mO\left(\log \log \frac{1}{\delta}\right)$ & $\mO(k M)$ & $\mO(k M \log \log \frac{1}{\delta})$ \\
        \bottomrule
    \end{tabular}
    }
    
    \small
    Note: The computational complexities are computed in terms of regularized least squared loss to obtain a $\delta$-accurate solution, i.e., $L(\ww_t) - L(\wkrr) \leq \delta$.
    The convergence analysis for FedAvg is provided in \cite{li2019convergence,su2021achieving} and that for FedProx is provided in \cite{li2020federated,su2021achieving}.
    
    \caption{Summary of communication properties for related work.}
    \label{tab.comparison}
\end{table*}

\subsection{Convergence Analysis for \texttt{FedNDES}}
To derive global convergence free from unknown problem parameters, we require a new condition.

  \begin{assumption}[Self-concordant function]
      \label{asm.self-concordant}
      A closed convex function $\varphi: \RR^d \to \RR$ is \textit{self-concordant} if $|\varphi'''(\ww)| \leq 2 (\varphi''(\ww))^{3/2}$.
      We assume the loss function $\ell$ is a convex self-concordant function.
  \end{assumption}
  
  The condition extends to the loss function $\ell: \RR^d \to \RR$ by imposing the requirement on the univariate function $\varphi_{\ww, \oo}(t) := \ell(\ww + t \oo)$ for $\ww, \oo$ in the domain of $\ell$.
  Examples for self-concordant functions include linear, quadratic functions, negative logarithm, and more examples can be found in \cite{lacotte2021adaptive}.
  
  \begin{definition}[Empirical effective dimension]
      If the regularity term is $\lambda$-strongly convex, the empirical effective Hessian dimension is defined as
      $
          \tilde{d}_\lambda := \sup_{\ww \in \mh} \text{\rm Tr}\left(\nabla^2 L(\md, \ww)(\nabla^2 L(\md, \ww) + \lambda I)^{-1}\right).
      $
  \end{definition}
  The empirical effective dimension $\tilde{d}_\lambda$ is substantially smaller than the feature space $M$ \cite{bach2013sharp,alaoui2015fast}.
  The effective dimension is related to the covariance matrix $\nabla^2 L(\md, \ww) = \frac{1}{N} \phi(\XX)^\top \phi(\XX)$, which has been well-studied for leverage scores sampling \cite{rudi2018fast,luise2019leveraging,chen2021fast} in low-rank approximation.
  Meanwhile, the expected effective dimension is defined as $\mathcal{N}(\lambda) = \text{Tr}(T_K (T_K + \lambda I)^{-1})$ based on the covariance operator $T_K = \int_X \langle \cdot, \phi(\xx) \rangle \phi(\xx) d \rho_X(\xx)$, which has been widely used to prove the optimal learning guarantees for the squared loss \cite{caponnetto2007optimal,smale2007learning}.
  
  Without initialization condition, we provide the following convergence guarantee for \texttt{FedNDES}.
  \begin{theorem}[Convergence guarantees of FedNDES]
      \label{thm.FedNDES}
      Let $\delta \in (0, 1)$ and the sketch matrices be SRHT.
      Under Assumptions \ref{asm.differentiable}, \ref{asm.self-concordant}, FedNDES updates in Algorithm \ref{alg.FedNDES}, then with a high probability, the number of iterations $T$ and the sketch size $k$ satisfying
      \begin{align*}
          T = \mO\left(\log \log \left(\frac{1}{\delta}\right)\right), \quad
          k = \Omega\left(\tilde{d}_\lambda\right),
      \end{align*}
      can obtain a $\delta$-accurate solution $\|\ww_t - \wkrr\| \leq \delta$ without any initialization condition.
      Here, $\wkrr$ is the centralized model on $\md$ and $\ww_t$ is the federated model trained in Algorithm \ref{alg.FedNDES}.
  \end{theorem}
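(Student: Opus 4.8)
The plan is to show that \texttt{FedNDES} is, round for round, an instance of the \emph{centralized} adaptive Newton sketch of \cite{lacotte2021adaptive} applied to the global objective $L(\md,\cdot)$, and then to import its two-phase convergence analysis after verifying the one genuinely new ingredient: that the block-diagonal concatenation of the local SRHT matrices still approximates the $\lambda$-regularized global Hessian with per-block sketch size only $\Omega(\tilde d_\lambda)$.

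First I would set up the reduction. Writing $A_j := \nabla^2 L(\mdnoniidj,\ww_t)^{1/2}\in\RR^{n_j\times M}$, one has $\nabla^2 L(\md,\ww_t)=\iteration A_j^\top A_j$, so stacking the rows $\sqrt{n_j/N}\,A_j$ into a single $N\times M$ matrix $A$ and the blocks $\SS_j^t$ into $\SS=\mathrm{blkdiag}(\SS_1^t,\dots,\SS_m^t)$ gives $\iteration\HessiannoniidjSketch^\top\HessiannoniidjSketch=(\SS A)^\top(\SS A)$; hence the $\HessianSketch$ of Algorithm~\ref{alg.FedNDES} coincides with the centralized partial sketch \eqref{eq.newton_sketch} of $L(\md,\cdot)$ with sketch matrix $\SS$. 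The global gradient $\gradient$ and the regularity Hessian $\lambda\nabla^2\alpha(\ww_t)$ are computed exactly, so $\Delta\ww_t$ and $\tilde\lambda(\ww_t)=\gradient^\top\Delta\ww_t$ are exactly the centralized approximate Newton step and decrement. It remains to handle the distributed line search: each local search returns $\mu_j$ with $L(\mdnoniidj,\ww_t+\mu_j\Delta\ww_t)\le L(\mdnoniidj,\ww_t)+a\mu_j\tilde\lambda(\ww_t)$; since $\mu\mapsto L(\mdnoniidj,\ww_t+\mu\Delta\ww_t)-L(\mdnoniidj,\ww_t)-a\mu\tilde\lambda(\ww_t)$ is convex in $\mu$ and vanishes at $\mu=0$, the inequality persists for every $\mu\in[0,\mu_j]$; taking $\mu=\min_j\mu_j$ and averaging over $j$ with weights $n_j/N$ yields the global Armijo condition $L(\md,\ww_t+\mu\Delta\ww_t)\le L(\md,\ww_t)+a\mu\tilde\lambda(\ww_t)$. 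Thus $\mu$ is an admissible backtracking step for the centralized algorithm, and one only needs a lower bound $\mu\gtrsim 1$ in the damped phase, which follows from $\HessianSketch\succeq\lambda I$ (Assumption~\ref{asm.differentiable}) together with the self-concordance of $L$ (Assumption~\ref{asm.self-concordant}), exactly as in the centralized proof.

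Second, I would establish the Hessian-approximation guarantee for the block-diagonal sketch. Fix $\ww$ and set $\Phi_j:=A_j(\nabla^2 L(\md,\ww)+\lambda I)^{-1/2}$, so $\iteration\Phi_j^\top\Phi_j=(\nabla^2 L(\md,\ww)+\lambda I)^{-1/2}\nabla^2 L(\md,\ww)(\nabla^2 L(\md,\ww)+\lambda I)^{-1/2}\preceq I$ has trace $\tilde d_\lambda$. The required relative-error bound $(1-\epsilon)\Hessian\preceq\HessianSketch\preceq(1+\epsilon)\Hessian$ (the regularity term is exact) reduces to $\bigl\|\iteration\bigl(\Phi_j^\top(\SS_j^t)^\top\SS_j^t\Phi_j-\Phi_j^\top\Phi_j\bigr)\bigr\|\le\epsilon$. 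Because each $\SS_j^t$ is an independent SRHT, I would apply the standard SRHT oblivious-subspace-embedding / approximate-matrix-multiplication bound to each block $\Phi_j$ and combine the blocks through a matrix Bernstein inequality, so that $k=\Omega(\tilde d_\lambda/\epsilon^2)$ per block suffices with high probability; taking $\epsilon$ a small absolute constant gives $\overline{m}_1,\overline{m}_2=\Omega(\tilde d_\lambda)$, with the second-phase size $\overline{m}_2$ of the same effective-dimension order because $\tilde\lambda(\ww_t)$ is already small there. Finally, with the reduction and the Hessian approximation in place, I would quote the centralized two-phase analysis: the damped-Newton phase decreases $L(\md,\cdot)$ by a constant per round until $\tilde\lambda(\ww_t)\le\eta$ — this is the source of the absence of any initialization condition — after which full steps $\mu=1$ give the quadratic-type recursion $\tilde\lambda(\ww_{t+1})\lesssim\tilde\lambda(\ww_t)^2$, hence $T=\mO(\log\log(1/\delta))$ rounds to reach the exit test $\tilde\lambda(\ww_t)^2\le\frac{3}{4}\delta$; self-concordance together with $\HessianSketch\succeq\lambda I$ then converts this into $\|\ww_t-\wkrr\|\le\delta$.

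The main obstacle is the second step: the effective dimension $\tilde d_\lambda$ is a \emph{global} quantity, whereas each local SRHT acts only on its own block $A_j$, so under data heterogeneity the local and global Hessians can differ substantially; the crux is to show that the subspace ``seen'' by worker $j$ is the image under $\Phi_j$ of the global ridge-leverage subspace, so that the aggregate Frobenius mass $\iteration\|\Phi_j\|_F^2=\tilde d_\lambda$ — not any per-worker quantity — controls the required sketch size, and to track the failure probabilities across all $m$ workers and all $T$ rounds by a union bound. A secondary subtlety is controlling the distributed step-size $\mu=\min_j\mu_j$ in the heterogeneous regime, since $\Delta\ww_t$ need not be a descent direction for an individual $L(\mdnoniidj,\cdot)$; here the use of the \emph{global} decrement $\tilde\lambda(\ww_t)$ as the common Armijo threshold, combined with $\HessianSketch\succeq\lambda I$ and self-concordance, is what keeps each $\mu_j$ bounded below so that $\min_j\mu_j$ still matches the centralized step-size lower bound.
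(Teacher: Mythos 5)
Your overall route is the same as the paper's: treat each round of \texttt{FedNDES} as one round of the centralized adaptive Newton sketch of \cite{lacotte2021adaptive} applied to $L(\md,\cdot)$ and import its two-phase (damped then quadratic) analysis to get $T=\mO(\log\log(1/\delta))$ and $k=\Omega(\tilde d_\lambda)$. In fact the paper's proof does \emph{only} that: it quotes Theorem~2 of \cite{lacotte2021adaptive} with the backtracking parameters and does the $\tau$/$p_0$ bookkeeping for $\overline m_1,\overline m_2$, and never discusses the federated structure at all. Your reduction is more explicit and, on one point, cleaner than the paper: writing the aggregate as $(\SS A)^\top(\SS A)$ with $\SS=\mathrm{blkdiag}(\SS_1^t,\dots,\SS_m^t)$ exactly reproduces $\iteration\HessiannoniidjSketch^\top\HessiannoniidjSketch$, whereas the horizontal-concatenation identity used in the paper's FedNS proof would introduce cross terms that the algorithm never computes. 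Your convexity argument showing the global Armijo condition holds at $\mu=\min_j\mu_j$ is also correct.

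However, the two ingredients you yourself identify as the crux are left as sketches, and they are precisely where the federated statement differs from the centralized one. First, the embedding step: Theorem~2 of \cite{lacotte2021adaptive} is proved for a \emph{single} SRHT applied to the full $N\times M$ square-root Hessian; a block-diagonal matrix of $m$ independent SRHTs is not an SRHT, so the result cannot simply be quoted, and the ``per-block AMM plus matrix Bernstein'' combination is not carried out. The difficulty is real: spectral error bounds for $\SS_j^t$ acting on $\Phi_j$ scale with per-block quantities (the leverage/stable-rank structure of $\Phi_j$), not with the global trace $\iteration\|\Phi_j\|_F^2=\tilde d_\lambda$ alone, so under heterogeneity it is not established that per-block $k=\Omega(\tilde d_\lambda)$ (rather than, say, something involving local effective dimensions or extra $\log$/heterogeneity factors) yields $(1\pm\epsilon)$-approximation of the regularized global Hessian. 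Second, the step-size lower bound: your claim that each $\mu_j$ is bounded below ``exactly as in the centralized proof'' does not hold. Since $\tilde\lambda(\ww_t)=\gradient^\top\Delta\ww_t<0$, worker $j$'s exit test requires $L(\mdnoniidj,\cdot)$ to \emph{decrease} along $\Delta\ww_t$ by a fixed fraction of the global decrement; with heterogeneous data $\Delta\ww_t$ need not be a descent direction for $L(\mdnoniidj,\cdot)$, in which case the local backtracking loop never terminates and $\mu_j\to 0$. Self-concordance and $\HessianSketch\succeq\lambda I$ control the curvature of the \emph{global} loss along the step; they say nothing about the sign of a local directional derivative. Without a lower bound on $\mu=\min_j\mu_j$, the constant per-round decrease in the damped phase — the very mechanism that removes the initialization condition and yields the $\mO(\log\log(1/\delta))$ count — is not justified. (To be fair, the paper's own proof silently assumes both points as well; but as a proof of the theorem as stated, these remain genuine gaps in your argument.)
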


  Compared to Theorem \ref{thm.FedNS}, the above theorem removes the initialization condition.
  More importantly, it reduces the sketch size from $M$ to $\tilde{d}_\lambda$, which is much smaller than $M$ and thus it is more practical in federated learning settings.
  Since the communicated local sketch square root Hessian $\HessiannoniidjSketch \in \RR^{k \times M}$, the communication complexity in each iteration is $\mO(\tilde{d}_\lambda M)$.
  For example, supposing $\tilde{d}_\lambda \gtrsim \log \log (1/\delta)$ as used in \cite{lacotte2021adaptive} with a standard learning rate $\delta = \mO(1/\sqrt{N})$, we obtain the sketch size $\tilde{d}_\lambda \gtrsim \log (0.5 \log N)$, which is significantly smaller than $M$.

  Since the generalization error $\delta$ and effective dimension $\tilde{d}_\lambda$ are relevant to the specific tasks, they are not estimated in Theorem \ref{thm.FedNDES}.
  However, they are important to measure the communication complexity in federated learning.
  It allows to provide accurate estimates for the error $\delta$ and the empirical effective dimension $\tilde{d}_\lambda$.

\begin{table}[t]
    \centering
    \small
    \begin{tabular}{lll|llll}
        \toprule
        Dataset &$n$ &$M$ &$k$ &$m$ &$\rho$ &$\alpha$ \\ \hline
        phishig &$11,055$ &$68$ &$17$ &$40$ &$0.1$ &$0.25$\\
        cod-rna &$59,535$ &$8$ &$10$ &$60$ &$30$ &$1$\\
        covtype &$581,012$ &$54$ &$20$ &$200$ &$50$ &$1$\\
        SUSY &$5,000,000$ &$18$ &$10$ &$1000$ &$50$ &$1$\\
        \bottomrule
    \end{tabular}
    \caption{Summary of datasets and hyperparameters.}
    \label{tab.exp}
\end{table}

\section{Compared with Related Work}

Table \ref{tab.comparison} reports the computational properties of related work to achieve $\delta$-accurate solutions.
In terms of the regularized least squared loss, we compare the proposed \texttt{FedNS} and \text{FedNDES} with first-order algorithms, and Newton-type FL methods.
\texttt{FedNS} applies to commonly used sketch approaches, e.g. sub-Gaussian, SRHT, and SJLT, while \texttt{FedNDES} only applies to SRHT.
Different sketch types leads to various sketch times on the $j$-th local machine, i.e., $\mO(n_j M k)$ for sub-Gaussian, $\mO(n_j M \log k)$ for SRHT and $\mO(\text{nnz}(\PhinoniidDj))$ for SJLT.

\textbf{Compared with first-order algorithms.} Federated Newton's methods converge much faster, $\mO(\log 1/\delta)$ v.s. $\mO(1/\delta)$. 
But the communication complexities of federated Newton's methods are much higher, at least $\mO(kMt)$, while it is $\mO(Mt)$ for FedAvg and FedProx.
The proposed \texttt{FedNDES} achieves balance between fast convergence rate and small communication complexity, of which the convergence rate is $\mO(\log 1/\delta)$ and the communication complexity is $\mO(\tilde{d}_\lambda M t)$.

\textbf{Compared with Newton-type FL methods.}
DistributedNewton \cite{ghosh2020distributed} and LocalNewton \cite{gupta2021localnewton} perform Newton's method instead of SGD on local machines.
However, they only utilized local information and implicitly assume the local datasets cross devices are homogeneous, which limits their application in FL.
In contrast, our proposed algorithms communicate local sketching Hessian matrices to approximate the global one, which are naturally applicable to heterogeneous settings.
More recently, there are three Newton-type FL methods:
\begin{itemize}
    \item \textbf{FedNL} \cite{safaryan2021fednl}  compressed the difference between the local Hessian and the global Hessian from the previous step, and transferred the compressed difference to the global server for merging. On the theoretical front, FedNL achieved at least the linear convergence $\mathcal{O}(\log 1/\delta)$ with the communication cost $O(M)$ per round.
    \item \textbf{FedNew} \cite{elgabli2022fednew} used ADMM to solve an unconstrained convex optimization problem for obtaining the local update directions ${\boldsymbol H}^{-1}_{\mathcal{D}_j, t} {\boldsymbol g_{\mathcal{D}_j, t}}$ and performed Newton's method by averaging these directions in the server. However, this work only proved the algorithm can converge but without the convergence rates.
    \item \textbf{SHED} \cite{fabbro2022newton} first performed eigendecomposition on the local Hessian and incrementally send the eigenvalues and eigenfunctions to the server. The local Hessians were recovered on the server to perform Newton's method. 
    The algorithm achieved sup-linear convergence with the total communications costs $\mathcal{O}(M^2)$.
\end{itemize}

These recent Newton-type FL methods usually admit linear convergence rates, while the proposed algorithms in this work reach super-linear convergences.

\begin{figure*}[t]
    \centering
    \subfigure{\includegraphics[width=0.245\linewidth]{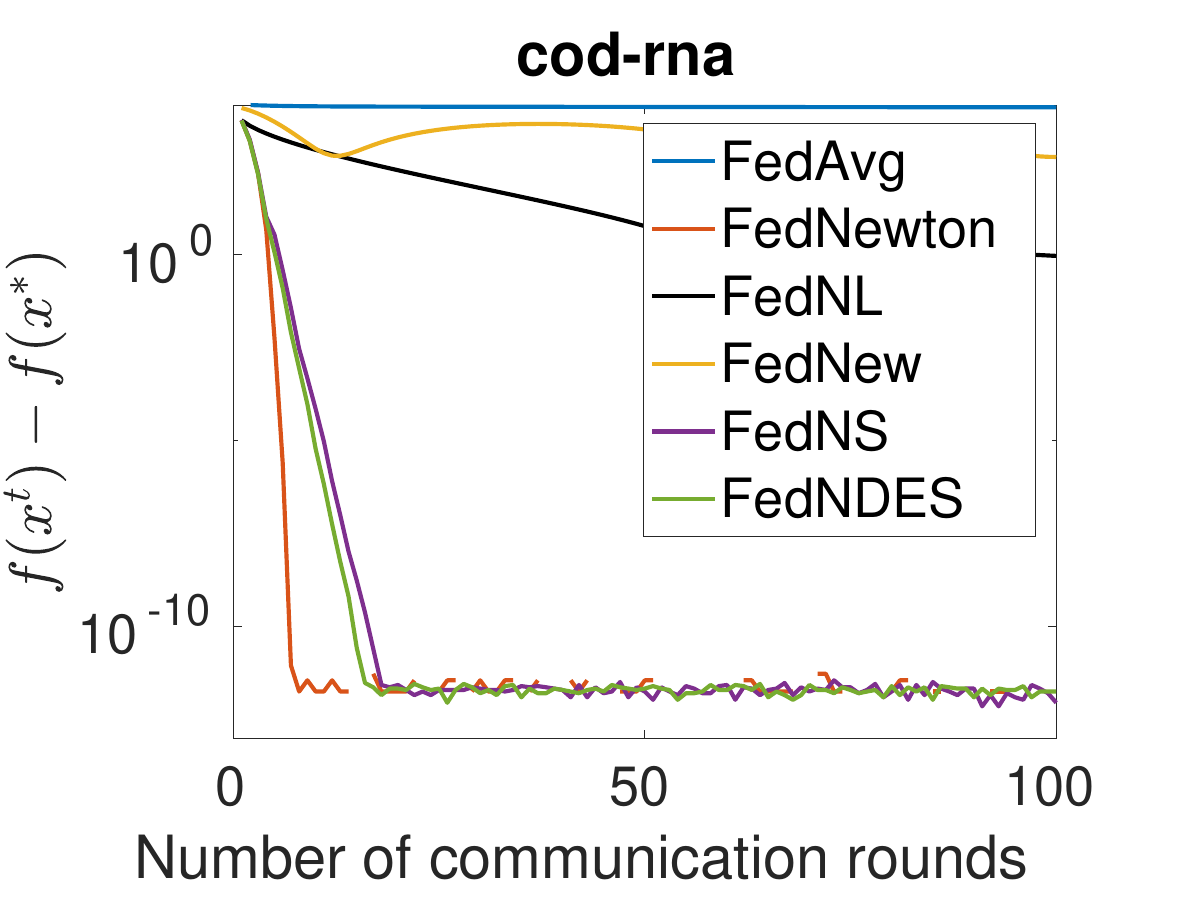}}
    \subfigure{\includegraphics[width=0.245\linewidth]{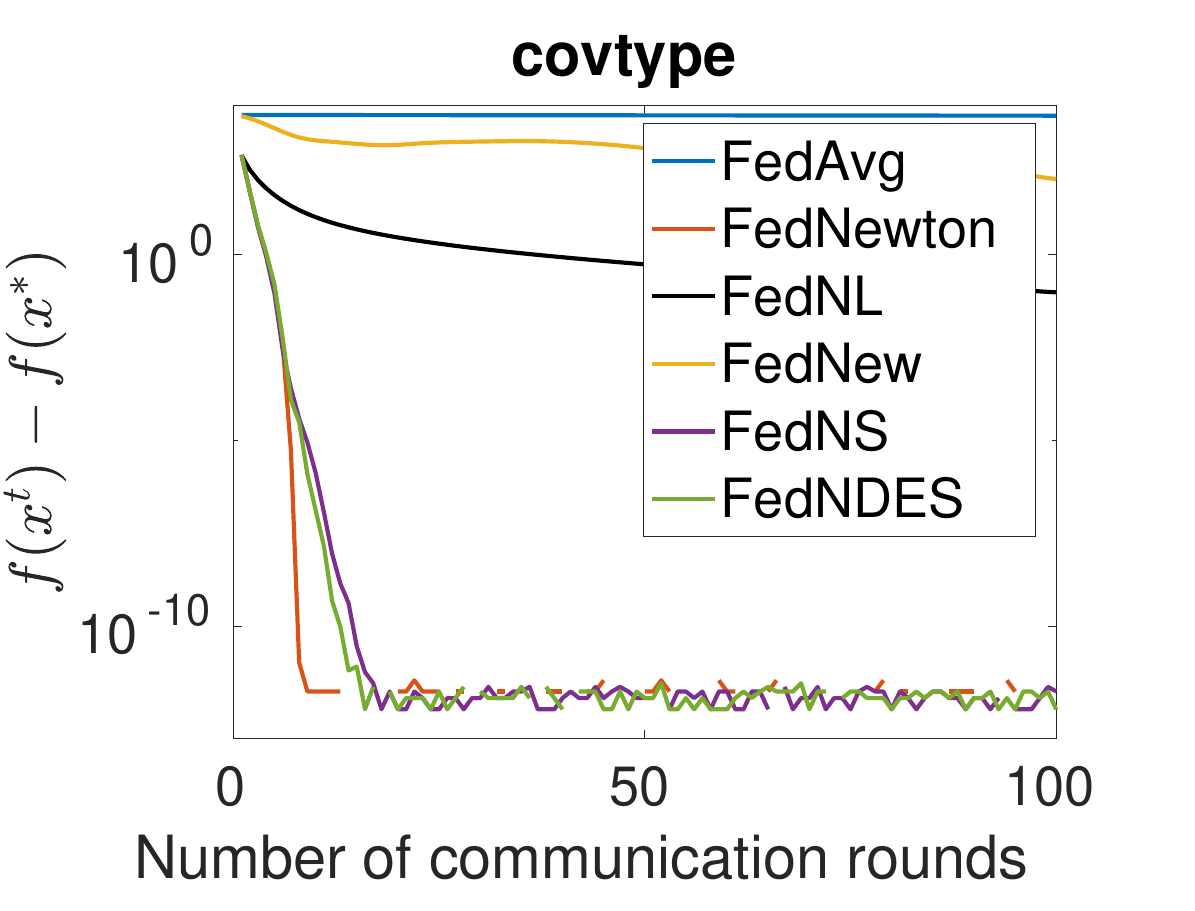}}
    \subfigure{\includegraphics[width=0.245\linewidth]{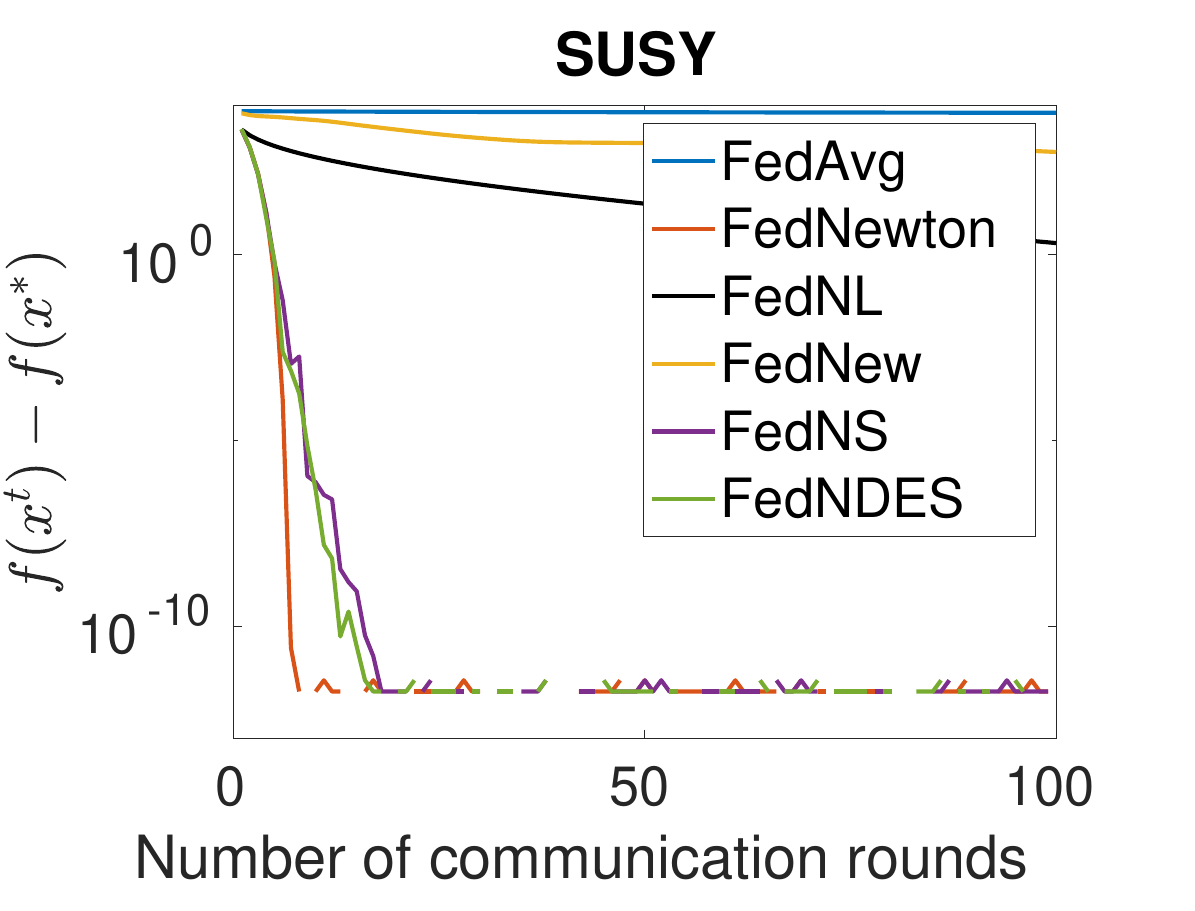}}
    \subfigure{\includegraphics[width=0.245\linewidth]{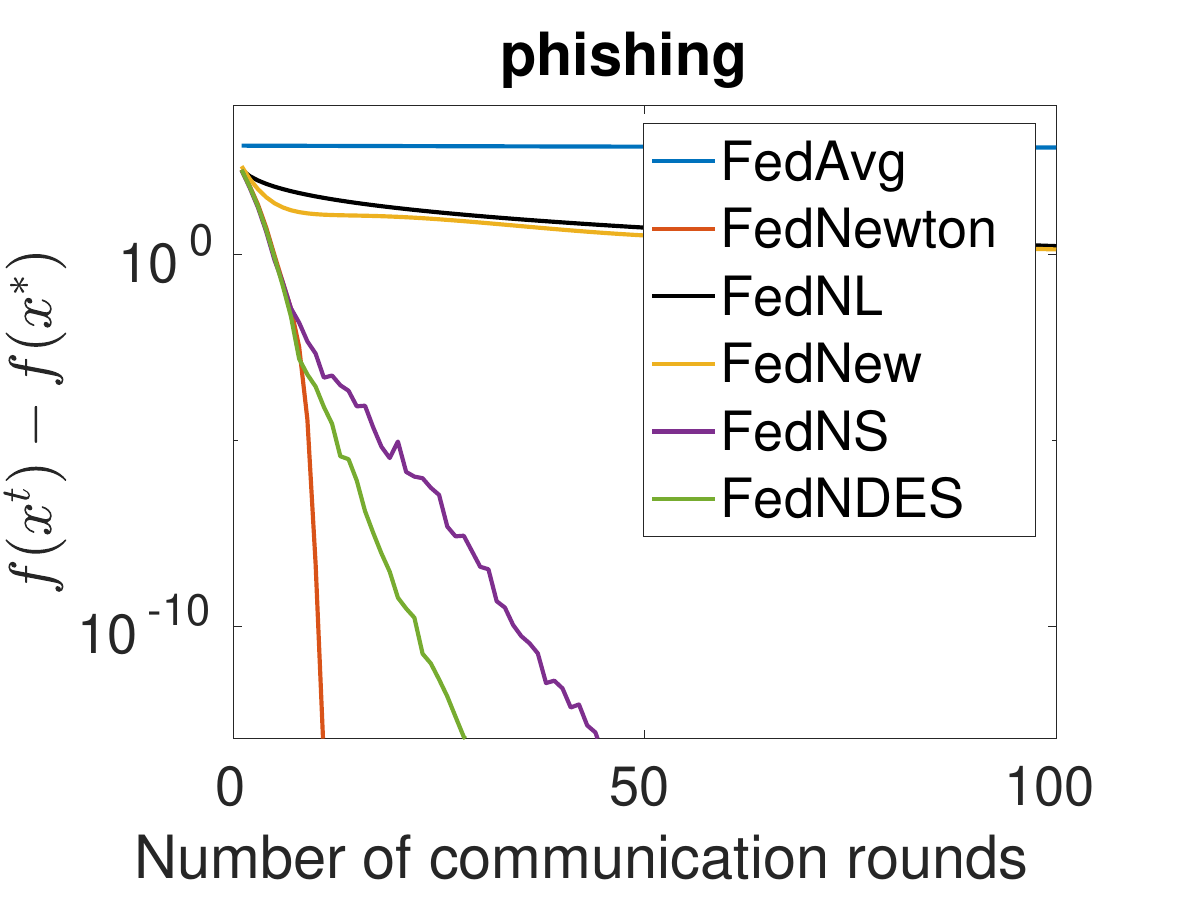}}
    \caption{The loss discrepancy between the compared methods and the optimal learner in terms of the number of communication rounds $t$.}
    \label{fig.exp.comparison}
\end{figure*}

\begin{figure*}[t]
    \centering
    \subfigure{\includegraphics[width=0.24\linewidth]{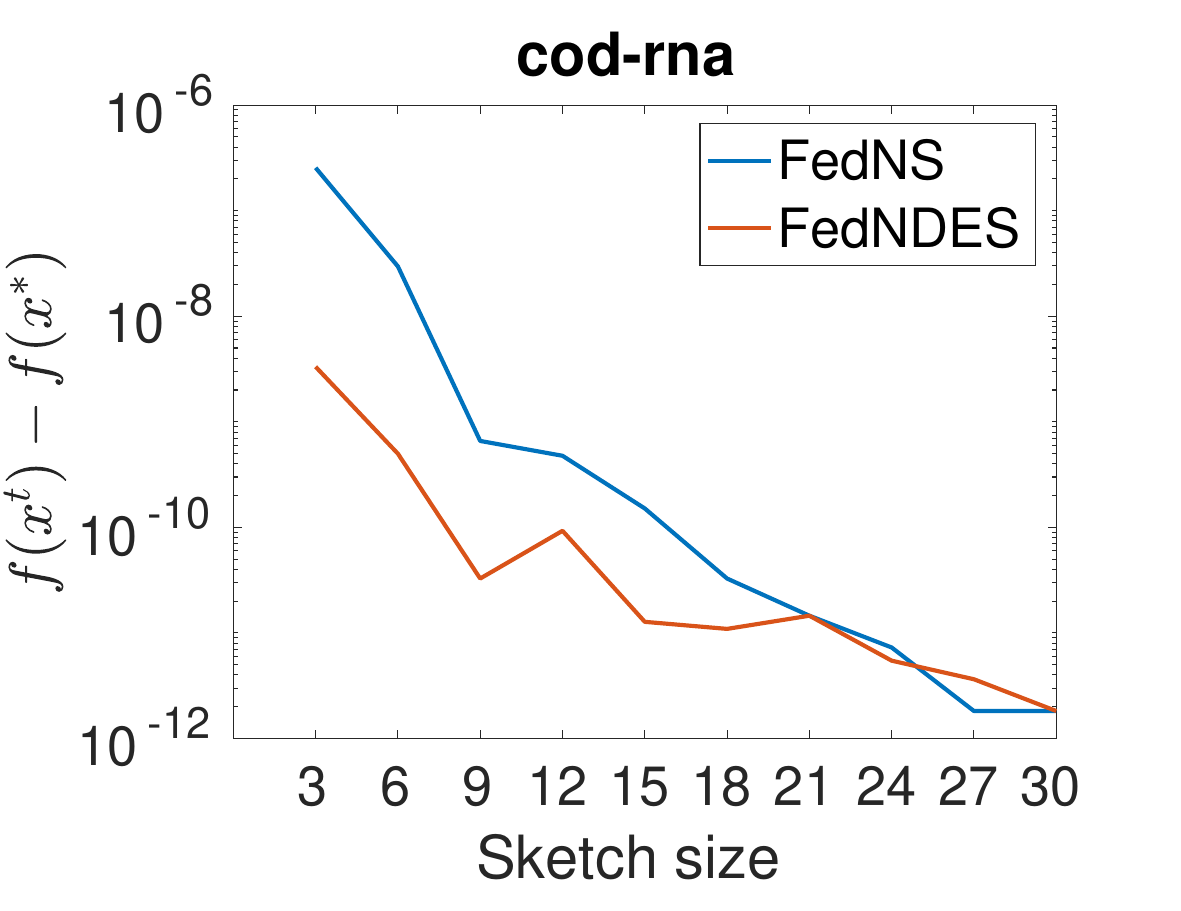}}
    \subfigure{\includegraphics[width=0.24\linewidth]{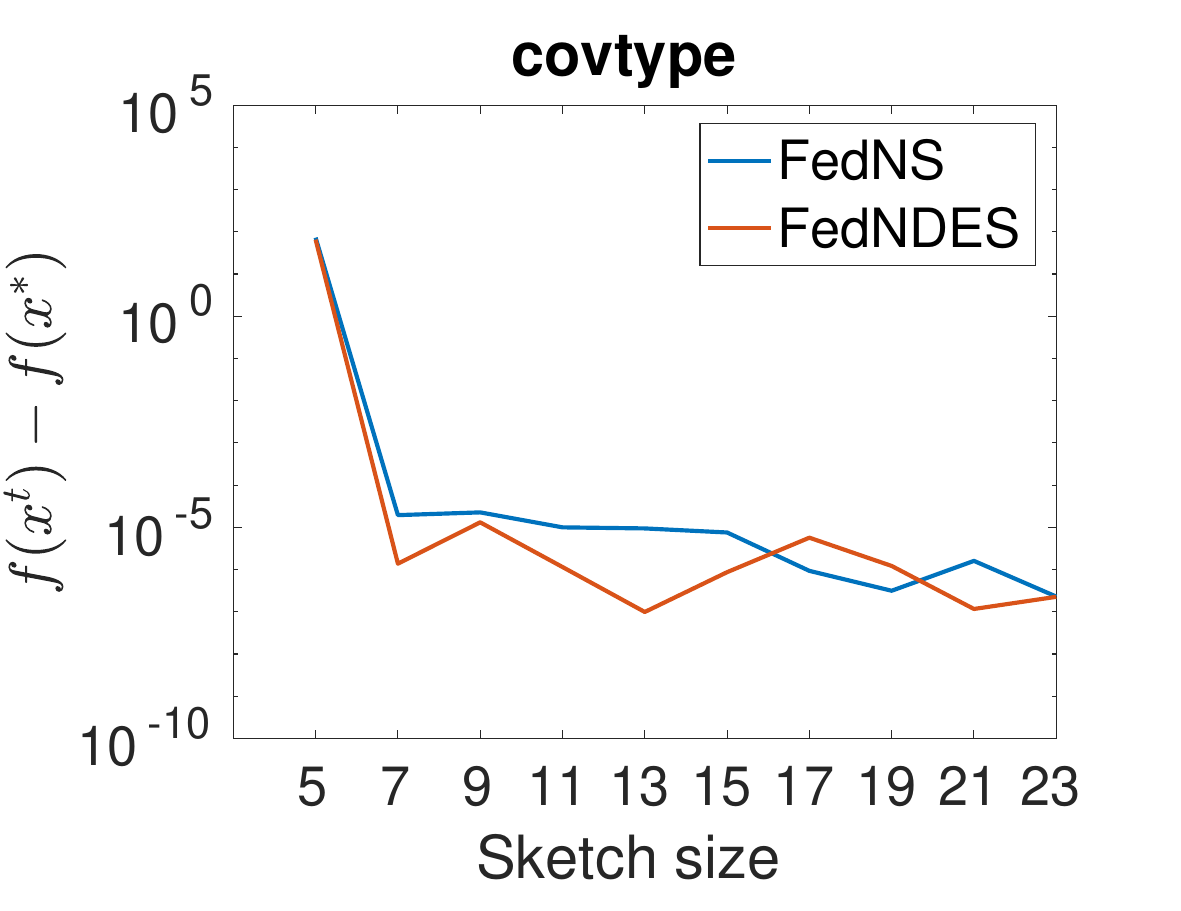}}
    \subfigure{\includegraphics[width=0.24\linewidth]{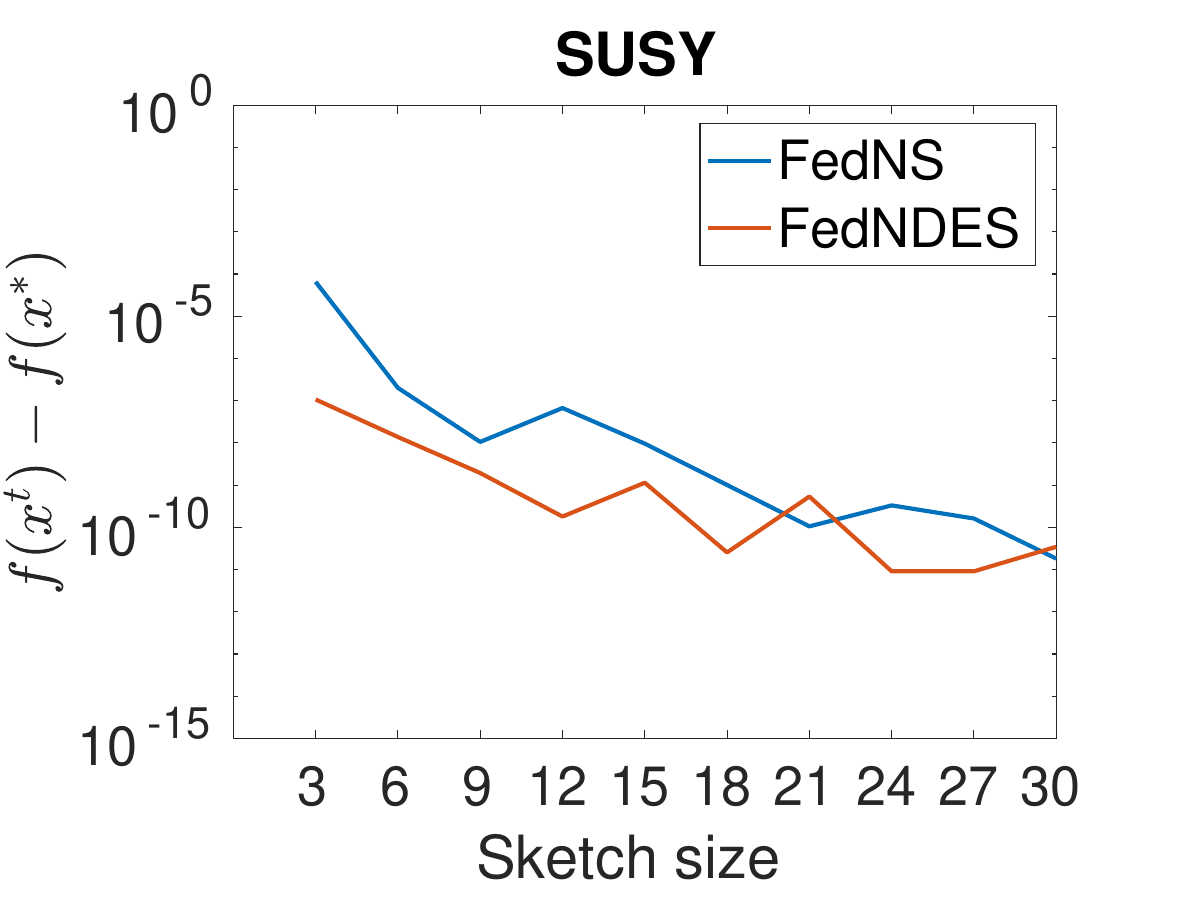}}
    \subfigure{\includegraphics[width=0.24\linewidth]{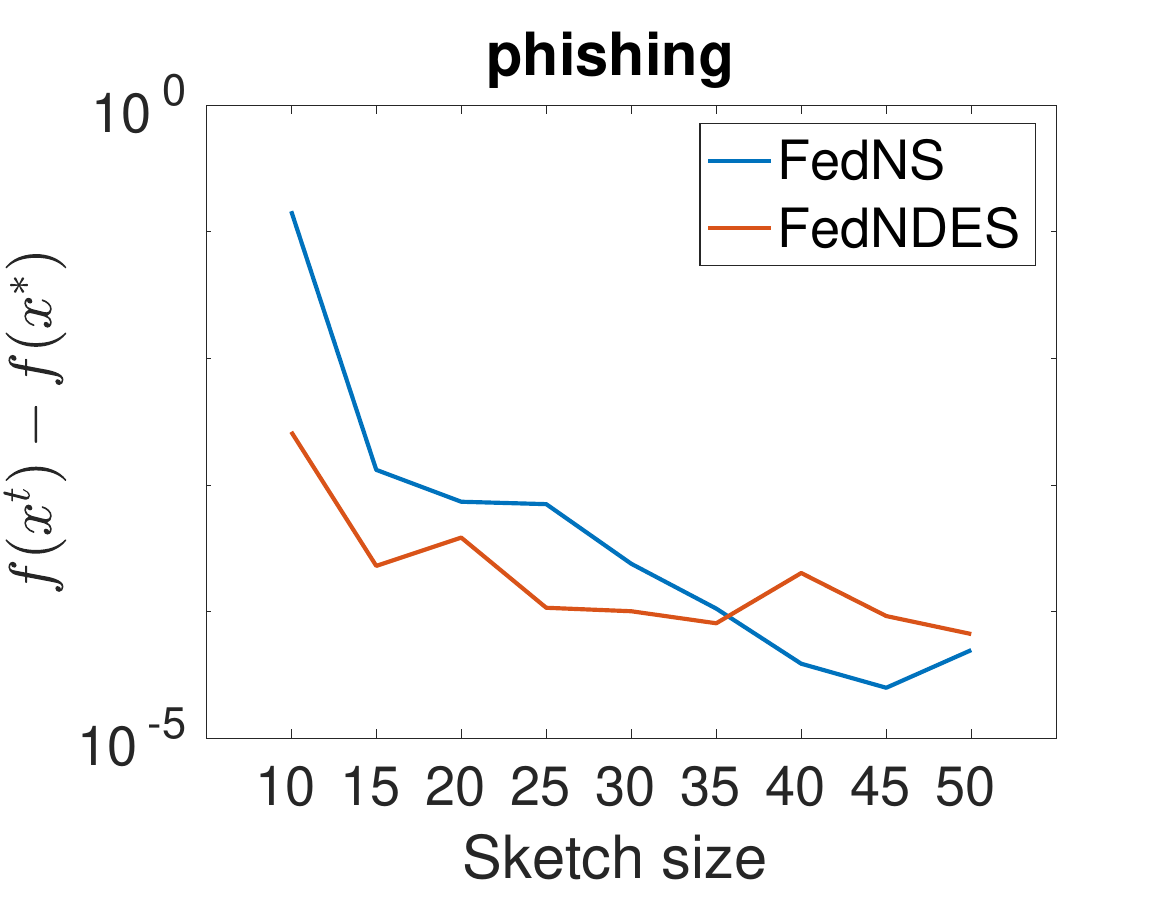}}
    \caption{The loss discrepancy between the compared methods and the optimal learner in terms of the sketch size $k$ on the datasets cod-rna, covtype, SUSY, and phishing}
    \label{fig.exp.sketch_size}
\end{figure*}

\section{Experiments}
In this section, we carry out experiments to corroborate our theoretical statements on several real-world federated datasets.
We implemented our methods by utilizing the public code from \cite{elgabli2022fednew} which includes FedNew \cite{elgabli2022fednew} and FedNL \cite{safaryan2021fednl}, while SHED algorithm was excluded due to the lack of public code \cite{fabbro2022newton}. The base model is a logistic regression and the algorithms update the Hessian at each iteration. 
We first explore the impact of sketching size on the proposed \texttt{FedNS} and \texttt{FedNDES}, and then compare related algorithms w.r.t. the communication rounds.

Following FedNew \cite{elgabli2022fednew}, we consider the regularized logistic regression $L(\md, \ww) = \frac{1}{N} \sum_{i=1}^{N} \log \left(1 + \exp(y_i \xx_i^\top \ww) \right) + \lambda \|\ww\|_2^2$, where $\lambda$ is a regularization parameter chosen to set $10^{-3}$.
All experiments are recorded by averaging results after $10$ trials and figures report the mean value.
We use the optimal gap $L(\ww^t) - L(\ww^*)$ as the performance indicator, where we use the global Newton's method as the optimal estimator $\ww^*$.
We evaluate the compared algorithms on public LIBSVM Data \cite{chang2011libsvm}.
We report the statistics of datasets and the corresponding hyperparameters in Table \ref{tab.exp}, where $\rho$ and $\alpha$ hyperparameters are used in FedNew.

\textbf{Convergence comparison.}
Figures \ref{fig.exp.comparison} reports the convergence of compared methods, demonstrating that: 
1) There is significant gaps between the convergence speeds of our proposed methods \texttt{FedNS}, \texttt{FedNDES} and the existing Newton-type FL methods, i.e. FedNew and FedNL. This validate the super-linear convergence of \texttt{FedNS} and \texttt{FedNDES}.
2) The proposed \texttt{FedNS} and \texttt{FedNDES} converge nearly to FedNewton, while FedNew and FedNL converge slowly closed to FedAvg.
3) \texttt{FedNDES} converges faster than \texttt{FedNS} and the final predictive accuracies of \texttt{FedNDES} are higher.
4) Even with small sketch sizes, the proposed sketched Newton-type FL methods can still preserve considerable accuracy.
5) Although our communication cost is higher $\mathcal{O}(Mk)$ than FedNew and FedNL, the number of communications is much smaller, resulting in lower total communications for our methods.

\textbf{Impact of sketch size on Performance}
Figure \ref{fig.exp.sketch_size} reports the predictive accuracies versus the sketch size, illustrating that 1) A larger sketch size always leads to better generalization performance. 
2) The proposed \texttt{FedNS} and \texttt{FedNDES} finally converges around the global Newton' method.
3) A small sketch size, i.e., $k \ll M$ and $k \ll N$, can still achieve good performance.
4) \texttt{FedNDES} obtains better generalization performance than \texttt{FedNS} with smaller sketch size.

\section{Conclusion}
Both convergence rate and communication costs are important to federated learning algorithms.
In this paper, by sketching the square-root Hessian, we devise federated Newton sketch methods, which communicate sketched matrices instead of the exact Hessian.
Theoretical guarantees show that the proposed algorithms achieve super-linear convergence rates with moderate communication costs.
Specifically, the sketch size of \texttt{FedNDES} can be small as the effective dimension of Hessian matrix.

Our techniques pave the way for designing Newton-type distributed algorithms with fast convergence rates. There are some future directions, including 1) One can employ adaptive effective dimension to effectively estimate the effective dimension of Hessian in practice \cite{lacotte2021adaptive}.
2) We consider the sparsification for the sketch Newton update in future \cite{derezinski2021newton} to further reduce the communication complexity.
3) The proposed methods and theoretical guarantees can be extened to decentralized learning \cite{hsieh2020non}.

\section{Acknowledgments}
The work of Jian Li is supported partially by National Natural Science Foundation of China (No. 62106257), and Project funded by China Postdoctoral Science Foundation (No. 2023T160680). The work of Yong Liu is supported partially by National Natural Science Foundation of China (No.62076234), Beijing Outstanding Young Scientist Program (No.BJJWZYJH012019100020098), the Unicom Innovation Ecological Cooperation Plan, and the CCF-Huawei Populus Grove Fund.

\bibliography{all}

\newpage
\appendix
\onecolumn

\section{Proofs}

\subsection{Convergence Analysis for \texttt{FedNS}}
\begin{proof}[Proof of Theorem \ref{thm.FedNS}]
    The main difference is that \texttt{FedNS} sketches local Hessian $\SS_j^t \Hessiannoniidjt^{1/2}$ on the loss function while \texttt{FedNS} directly sketches the global Hessian $\SS^t \Hessian^{1/2}$ on the objective.
    
    In Algorithm \ref{alg.FedNS}, we generalize local sketch matrices $(\SS_j^t)_{j=1}^m$ of the dimension $k \times n_j$ in an independent serialization.
    By concatenating local sketch matrices in the column direction and local square-root Hessian matrices in the row direction, we obtain
    \begin{align*}
        \SS^t &= [\SS^t_1, \cdots, \SS^t_m], \\
        \Hessian^{1/2} &= \left[\HH_{\md_1, t}, \cdots, \HH_{\md_m, t}\right]^\top.
    \end{align*}

    The above equations lead to 
    \begin{align*}
        \SS^t \Hessian^{1/2} = \sum_{j=1} \SS^t_j \Hessiannoniidjt^{1/2}.
    \end{align*}
    Therefore, the update of \texttt{FedNS} recovers the centralized Newton's method.

    From Corollary 1 \cite{pilanci2017newton}, the sketch size is lower bounded by the form of the squared Gaussian width, which is at most $\min\{N, M\}$.
    Since $N > M$ in federated learning, we have $k \gtrsim M$.
    The distance $\|\ww_t - \wkrr\|$ becomes substantially less than $1$ as the iteration increase.
    And then from Corollary 1 in \cite{pilanci2017newton}, considering the Newton sketch iterates using the iteration-dependent sketching accuracy $\epsilon = \frac{1}{\log(1+t)}$, it holds with the probability at least $1 - c_1 e^{-c_2 k \epsilon^2}$ that
    \begin{align*}
        &\|\ww_t - \wkrr\|_2 \\
        \leq &\frac{1}{\log(1 + t)} \frac{\beta}{\gamma} \|\ww_{t-1} - \wkrr\|_2 + \frac{4L}{\gamma} \|\ww_t - \wkrr\|_2^2.
    \end{align*}
    Note that from Lemma 1 in \cite{pilanci2017newton}, the sketch size satisfies $m \gtrsim \epsilon^{-2} M = \frac{1}{\log^2(1+t)} M$.
\end{proof}

\subsection{Convergence Analysis for \texttt{FedNDES}}
\begin{proof}[Proof of Theorem \ref{thm.FedNDES}]
    From Theorem 2 \cite{pilanci2017newton} and Lemma \cite{lacotte2021adaptive}, based on the backtracking parameters $(a, b)$ in Algorithm \ref{alg.FedNDES}, we define the parameters
    \begin{align*}
        \nu := ab\frac{\eta^2}{1+\left(\frac{1+\epsilon}{1-\epsilon}\right) \eta}, \qquad
        \eta := \frac{1}{8} \frac{1-\frac{1}{2}\left(\frac{1+\epsilon}{1-\epsilon}\right)^2 - a}{\left(\frac{1+\epsilon}{1-\epsilon}\right)^3}.
    \end{align*}
    Then, from Theorem 2 \cite{lacotte2021adaptive}, to obtain $\delta$-accurate solution with the probability at least $1 - p_0$, the number of total iterations $T$ should satisfy the condition 
    \begin{align*}
        T \leq \overline{T} := \frac{L(\ww_0) - L(\wkrr)}{\nu} + T_{\tau, \frac{3}{8}\delta} + 1,
    \end{align*}
    where $\lim_{\tau \to 0} T_{\tau, \frac{3}{8}\delta} \leq \frac{\log(8/3\delta)}{\log(25/16)}$.

    Meanwhile, two stages sketch sizes should satisfy 
    \begin{align*}
        &\overline{m}_1 \gtrsim \tilde{d}_\lambda + \log\left(\frac{\overline{T}}{p_0}\right)\log \left(\frac{\tilde{d}_\lambda \overline{T}}{p_0}\right), \\        
        &\overline{m}_2 \gtrsim \delta^{-\tau} \left[\tilde{d}_\lambda + \log\left(\frac{\overline{T}}{p_0 \delta^{\tau/2}}\right)\log \left(\frac{\tilde{d}_\lambda \overline{T}}{p_0}\right)\right].
    \end{align*}

    We consider the linear convergence case, i.e., $\tau=1$. 
    Using $\tau=0$ and ignoring the logarithm terms, we obtain $p_0 \asymp \frac{1}{\tilde{d}_\lambda}$, the sketch sizes $\overline{m}_1 \asymp \tilde{d}_\lambda$ and $\overline{m}_2 \asymp \frac{\tilde{d}_\lambda \log(\tilde{d}_\lambda/\delta)}{\delta}$, and the number of iterations $T \lesssim \log \log (\frac{1}{\delta})$.
\end{proof}

\subsection{Generalization Analysis for \texttt{FedNS} with the Squared Loss}
  \label{sec.krr}
  In the above sections, we present the converge analysis for \texttt{FedNS} and \texttt{FedNDES}, but the generalization analysis relies on the specific loss function.
  Here, we consider the squared loss with the RKHS and the ridge regularization, i.e. kernel ridge regression (KRR).
  Together with the classic integral operator theory, we derive the generalization error bound for KRR with the optimal learning rates.
  
  The target of regression learning is to find a predictor to approximate the true regression in the RKHS
  \begin{equation}
      \begin{aligned}
          \label{f.rho}
          &\frho(\xx)=\int_\mathcal{Y} y d\rho(y|\xx), \qquad \xx \in \mathcal{X}.
      \end{aligned}
  \end{equation}
  
  \begin{assumption}[Source condition]
      \label{asm.regularity}
      Define the integral operators $L: \Ltwo \to \Ltwo$,
      \begin{align*}
          (L g)(\cdot) = \int_\X \langle \phi(\cdot), \phi(\xx) \rangle g(\xx)d\rhox(\xx), \quad  \forall ~ g \in \Ltwo.
      \end{align*}
      Assume there exists $R>0$, $r \in [1/2, 1]$, such that
      \begin{align*}
          \|L^{-r}\frho\| \leq R.
      \end{align*}
      where the operator $L^r$ denotes the $r$-th power of $L$ as a compact and positive operator.
  \end{assumption}

\begin{assumption}[Capacity condition]
    \label{asm.capacity}    
    For $\lambda \in (0, 1)$, we define the effective dimensions as
    \begin{align*}
        \mathcal{N}(\lambda) = \text{Tr} (C (C + \lambda I)^{-1}), 
    \end{align*}
    Assume there exists $Q>0$ and $\gamma \in [0, 1]$, such that
    \begin{align*}
        \mathcal{N}(\lambda) \leq Q^2\lambda^{-\gamma}.
    \end{align*}
\end{assumption}

Both source condition and capacity condition are standard assumptions in the optimal statistical learning for the KRR related literature \cite{caponnetto2007optimal,smale2007learning,rudi2017generalization,lin2020optimal,liu2021effective}.
The effective dimension $\mathcal{N}(\lambda)$ measure the capacity of the RKHS $\mh$, and it is the expected version of $\tilde{d}_\lambda$ for KRR, depending on the distribution rather than the sample.

\begin{theorem}[Excess risk bound for FedNS with the squared loss]
    \label{thm.KRR}
    Under Assumptions \ref{asm.regularity}, \ref{asm.capacity}, $r \in [1/2, 1]$ and $\gamma \in [0, 1]$, then with a high probability, the number of iterations $T$ and the sketch size $k$ satisfying
    \begin{align*}
        T &= \mO\left(\frac{2r}{2r+\gamma}\log N\right), \\
        k &= \left\{
            \begin{array}{lcl}
            \Omega\left(M\right), & \qquad & \text{for \texttt{FedNS}}\\
            \Omega\left(N^\frac{\gamma}{2r+\gamma}\right), & & \text{for \texttt{FedNDES}}.
            \end{array}
        \right.
    \end{align*}
    can obtain the following generalization error bound
    \begin{align*}
        L(\ww_t) - L(\wrho) = \mO\left(N^\frac{-2r}{2r+\gamma}\right).
    \end{align*}
    Note that, $\wrho \in \mh$ is the RKHS model for the target predictor $\frho(\xx) = \langle \wrho, \phi(\xx)\rangle$.
\end{theorem}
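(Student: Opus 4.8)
\textbf{Overall plan.} The plan is to feed the decomposition \eqref{eq.error_decomposition} with two essentially independent estimates: the \emph{federated (optimization) error} $\|\ww_t-\wkrr\|_K$, controlled by the convergence analyses of \texttt{FedNS} and \texttt{FedNDES}, and the \emph{centralized (statistical) excess risk} $L(\wkrr)-L(\wrho)$, controlled by the classical integral-operator theory of kernel ridge regression. Here $\wkrr$ is the minimizer in \eqref{objective.general} with $\ell$ the squared loss and $\alpha(\ww)=\|\ww\|_K^2$, i.e.\ exactly the KRR estimator, so both estimates concern the same object. I would fix the ridge level $\lambda=\lambda_N\asymp N^{-1/(2r+\gamma)}$, the standard bias--variance balancing choice under Assumptions~\ref{asm.regularity} and~\ref{asm.capacity}.

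\textbf{Statistical term.} First I would quote the classical KRR rate: under the source condition ($r\in[1/2,1]$) and the capacity condition ($\gamma\in[0,1]$), with $\lambda=\lambda_N$ one has, with high probability, $L(\wkrr)-L(\wrho)=\mO\!\big(N^{-2r/(2r+\gamma)}\big)$ \cite{caponnetto2007optimal,smale2007learning,rudi2017generalization,lin2020optimal}. This step alone produces the target exponent $\tfrac{2r}{2r+\gamma}$; the quantity governing it is $\mathcal N(\lambda)$ from Assumption~\ref{asm.capacity}, the population analogue of $\tilde d_\lambda$, and nothing federated enters here.

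\textbf{Optimization term and sketch size.} Next, by boundedness of the kernel ($\sup_x K(x,x)\le\kappa^2$), $\|f_{\ww_t}-f_{\wkrr}\|_{\Ltwo}^2\le\kappa^2\|\ww_t-\wkrr\|_K^2$, so it suffices to iterate until $\|\ww_t-\wkrr\|_K\lesssim N^{-r/(2r+\gamma)}$. For the squared loss every hypothesis of the convergence theorems is met: $\ell$ is quadratic, hence twice differentiable, convex and self-concordant (Assumptions~\ref{asm.differentiable} and~\ref{asm.self-concordant}), $\nabla^2\alpha\succeq I$, and $\nabla^2 L(\md,\cdot)$ is constant so its Lipschitz modulus is $G=0$, which makes the initialization condition of Theorem~\ref{thm.FedNS} vacuous. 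Applying Theorem~\ref{thm.FedNS} with $k=\Omega(M)$, or Theorem~\ref{thm.FedNDES} with $k=\Omega(\tilde d_\lambda)$ and no initialization, to reach this accuracy takes $T=\mO(\log(1/\delta))=\mO\!\big(\tfrac{2r}{2r+\gamma}\log N\big)$ communication rounds (and fewer, by super-linearity). For \texttt{FedNDES} I would also bound the empirical effective dimension by its population version: since $\lambda_N\gtrsim\log N/N$, the standard effective-dimension concentration \cite{alaoui2015fast,rudi2018fast} gives $\tilde d_\lambda\lesssim\mathcal N(\lambda_N)\le Q^2\lambda_N^{-\gamma}\asymp N^{\gamma/(2r+\gamma)}$, hence $k=\Omega\!\big(N^{\gamma/(2r+\gamma)}\big)$, whereas for \texttt{FedNS} we keep $k=\Omega(M)$. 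Feeding everything into \eqref{eq.error_decomposition} gives $L(\ww_t)-L(\wrho)\le 2\kappa^2\|\ww_t-\wkrr\|_K^2+2\big(L(\wkrr)-L(\wrho)\big)=\mO\!\big(N^{-2r/(2r+\gamma)}\big)$ after a union bound over the statistical, sketch-success, and effective-dimension events.

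\textbf{Main obstacle.} I expect the delicate points to be (i) keeping $\lambda=\lambda_N$ consistent across the two regimes --- it must simultaneously serve as the strong-convexity modulus of the regularizer inside the Newton step and as the KRR bias--variance balancer --- and (ii) the exact bookkeeping that relates the optimization accuracy as stated in Theorems~\ref{thm.FedNS} and~\ref{thm.FedNDES} (expressed there in RKHS norm, resp.\ excess empirical risk) to the population $\Ltwo$ excess risk, together with the empirical-to-expected effective-dimension concentration at the small scale $\lambda=\lambda_N$. The crude $\kappa^2$ bound is enough for the stated rate because the target optimization accuracy may be taken as small as desired; a sharper $(T_K+\lambda I)^{1/2}$-weighted norm would only be needed to make the round count independent of the problem's conditioning.
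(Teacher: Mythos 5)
Your proposal follows essentially the same route as the paper's own proof: the decomposition \eqref{eq.error_decomposition} with $\lambda \asymp N^{-1/(2r+\gamma)}$, the standard centralized KRR rate from \cite{caponnetto2007optimal,smale2007learning} for the statistical term, Theorems~\ref{thm.FedNS} and~\ref{thm.FedNDES} with $\delta \asymp N^{-2r/(2r+\gamma)}$ for the federated term, and the empirical-to-population effective-dimension bound (Lemma~1 of \cite{rudi2018fast}) to get $\tilde d_\lambda \asymp N^{\gamma/(2r+\gamma)}$. Your additional observations (the $G=0$ remark making the initialization condition vacuous for the quadratic loss, and the explicit RKHS-to-$\Ltwo$ conversion) are refinements of bookkeeping the paper glosses over, not a different argument.
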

\begin{proof}[Proof of Theorem \ref{thm.KRR}]
    From the error decomposition \eqref{eq.error_decomposition}, we have
    \begin{align}
        \label{eq.proof.error_decomposition}
        L(\ww_t) - L(\wrho)
        \leq L(\ww_t) - L(\wkrr) + L(\wkrr) - L(\wrho),
    \end{align}
    where $\wkrr = \left[\PhiD^\top \PhiD + \lambda N I\right]^{-1} \PhiD^\top \yy$ is the closed-form solution on the entire training data $\md$.

    Under Assumptions \ref{asm.regularity}, \ref{asm.capacity}, and setting $\lambda = N^\frac{-1}{2r+\gamma}$, the excess risk bound for centralized KRR is standard \cite{caponnetto2007optimal,smale2007learning}
    \begin{align}
        \label{eq.proof.centralized_excess_risk}
        L(\wkrr) - L(\wrho) = \mO \left(N^\frac{-2r}{2r+\gamma}\right).
    \end{align}

    We let $\delta \asymp N^\frac{-2r}{2r+\gamma}$, and then 
    the federated error holds with a high probability
    \begin{align}
        \label{eq.proof.federated_error}
        L(\ww_t) - L(\wkrr) = \mO \left(N^\frac{-2r}{2r+\gamma}\right)
    \end{align}
    
    From Theorem \ref{thm.FedNS}, since the square loss satisfy Assumptions \ref{asm.differentiable}, \ref{asm.convex}, \ref{asm.lipschitz}, the number iterations and the sketch size for \texttt{FedNS} achieve
    \begin{align}
        \label{eq.proof.federated_error.FedNS}
        T = \mO\left(\frac{2r}{2r+\gamma}\log N\right), \quad
        k = \Omega(M).
    \end{align}
    
    Similarly, from Theorem \ref{thm.FedNDES}, since the square loss satisfy Assumptions \ref{asm.differentiable}, \ref{asm.self-concordant}, the number iterations and the sketch size for \texttt{FedNDES} should satisfy 
    \begin{align}
        \label{eq.proof.federated_error.FedNDES}
        T = \mO\left(\frac{2r}{2r+\gamma}\log N\right), \quad
        k = \Omega\left(\tilde{d}_\lambda \right) = \Omega\left(N^\frac{\gamma}{2r+\gamma}\right).
    \end{align}
    The last step is due to $(1/3) N^\frac{\gamma}{2r+\gamma} \leq \tilde{d}_\lambda \leq  3 N^\frac{\gamma}{2r+\gamma}$ from Lemma 1 \cite{rudi2018fast} together with Assumption \ref{asm.capacity} and $\lambda = N^\frac{-1}{2r+\gamma}$.

    Substituting \eqref{eq.proof.centralized_excess_risk}, \eqref{eq.proof.federated_error}, \eqref{eq.proof.federated_error.FedNS} and \eqref{eq.proof.federated_error.FedNDES} to \eqref{eq.proof.error_decomposition}, we prove the theorem.
\end{proof}

The learning rate in the above generalization error bound is $\mO(N^\frac{-2r}{2r+\gamma})$, which is optimal in the minimax sense \cite{caponnetto2007optimal}.
Since both the number of iterations $T$ and the sketch size $k$ are estimated, we can compute the total time complexity and communication complexity.
For the sake of simplicity, we assume $n_1 = \cdots = n_m = N/m$.
For \texttt{FedNS}, the total computational complexity is $\mO(\max_{j\in[m]} {n_j} M d + n_j M \log M + mM^3 + M^2 \log N)$ and the communication complexity is $\mO(M^2 \log N)$.
For \texttt{FedNDES}, the total computational complexity is $\mO(\max_{j\in[m]} {n_j} M d + n_j M \log N + mN^\frac{\gamma}{2r+\gamma}M^2 + M^3 + M^2 \log N)$ and the communication complexity is $\mO(M N^\frac{\gamma}{2r+\gamma} \log N)$.
Note that, when $M \leq N^\frac{\gamma}{2r+\gamma}$, \texttt{FedNS} obtain smaller complexities but it requires the initialization condition.

In the worst case $(r = 1/2, \gamma = 1)$, without Assumptions \ref{asm.regularity}, \ref{asm.capacity}, the sketch size is $k = \Omega(\sqrt{N})$ to achieve the optimal learning rate. 
In the benign case $\gamma \to 0$, a constant number of sketch size $k = \Omega(1)$ is sufficient to guarantee the optimal rate.

\end{document}